\documentclass[preprint]{article}

\usepackage{neurips_2026}

\usepackage{astro_journals}
\usepackage{multirow}
\usepackage[utf8]{inputenc}
\usepackage[T1]{fontenc}
\usepackage{hyperref}
\usepackage{url}
\usepackage{booktabs}
\usepackage{amsfonts}
\usepackage{nicefrac}
\usepackage{microtype}
\usepackage{xcolor}
\usepackage{amsmath,amssymb,amsthm,bm}
\usepackage{mathtools}
\usepackage{graphicx}
\usepackage{subcaption}
\newcommand{\IMNN}{\textsc{IMNN}}

\newcommand{\logdetF}{\log|F|}
\newcommand{\Om}{\Omega_{\mathrm{m}}}
\newcommand{\sig}{\sigma_8}

\newcommand{\kpiv}{k_{\mathrm{pivot}}}

\newcommand{\R}{\mathbb{R}}
\newcommand{\X}{\mathcal{X}}
\newcommand{\E}{\mathbb{E}}

\newtheorem{theorem}{Theorem}[section]

\newtheorem{proposition}[theorem]{Proposition}

\newtheorem{assumption}[theorem]{Assumption}
\theoremstyle{definition}
\newtheorem{definition}[theorem]{Definition}

\definecolor{ForestGreen}{rgb}{0.13, 0.55, 0.13}
\definecolor{airforceblue}{rgb}{0.36, 0.54, 0.66}
\definecolor{orange}{rgb}{1.0, 0.5, 0.0}
\definecolor{amethyst}{rgb}{0.6, 0.4, 0.8}
\definecolor{awesome}{rgb}{1.0, 0.13, 0.32}
\definecolor{chromeyellow}{rgb}{1.0, 0.65, 0.0}

\newcommand{\tabgroup}[2]{\multicolumn{#1}{@{}l}{\emph{#2}}\\}

\title{TopoFisher: Learning Topological Summary Statistics by Maximizing Fisher Information}

\author{%
\textbf{Matteo Biagetti}$^{1}$\thanks{Corresponding author: \texttt{matteo.biagetti@areasciencepark.it}}
\quad
\textbf{Mathieu Carrière}$^{2}$
\quad
\textbf{Francesco Conti}$^{2}$\\
\textbf{Enrico Maria Ferrari}$^{1}$
\quad
\textbf{Sven C. Heydenreich}$^{1}$
\quad
\textbf{Karthik Viswanathan}$^{3}$\\[1mm]
$^{1}$Area Science Park, Padriciano 99, Trieste, Italy\\
$^{2}$DataShape, Centre Inria d'Université Côte d'Azur, Sophia-Antipolis, France\\
$^{3}$Institute of Physics, University of Amsterdam, Amsterdam
}

\begin{document}
\maketitle

\begin{abstract}
Persistence diagrams provide stable, interpretable summaries of geometric and
topological structure, and are useful for simulation-based inference when
important information is not captured by low-order statistics. In practice,
however, persistence-based pipelines require hand-chosen filtrations,
vectorizations, and compressors, usually without an objective tied directly to
parameter uncertainty. We introduce \textbf{TopoFisher}, a differentiable
persistent-homology pipeline that learns topological summaries by maximizing
local Gaussian Fisher information. From simulations near a fiducial parameter
value, TopoFisher optimizes trainable filtrations, diagram vectorizations, and
compressors without posterior samples or supervised regression targets, while
preserving the inductive bias of stable topological descriptors. We also give
sufficient regularity conditions under which the log-determinant
Fisher loss is locally Lipschitz in the trainable parameters. Controlled
experiments on noisy spirals and Gaussian random fields, where the total Fisher
information is known, validate the pipeline: TopoFisher recovers a large
fraction of the available information and improves over fixed topological
vectorizations. Our main results are on weak gravitational lensing, a
high-dimensional non-Gaussian field-inference problem from cosmology. There, both learned topological summaries, a fixed cubical filtration with a
learned PersLay vectorization, and a learned CNN filtration with the same
PersLay vectorization, reach $\log|F|\approx 21$, compared with $13.8$ for
the power spectrum, $17.1$ for peak counts, and $19.3$ for wavelet
scattering, approaching an unconstrained Information Maximising Neural
Network baseline ($22.4$) with up to $\sim80\times$ fewer parameters.
More importantly, the fixed-filtration variant generalizes
better: under simulator shift from lognormal to LPT-based maps it
retains $\log|F|=19.24$ while the neural baseline drops to $8.9$, and in
neural posterior estimation it yields tighter constraints than the neural
baseline, power spectrum, peak counts, and wavelet scattering.
These results suggest Fisher-based topological optimizations as a robust, parameter-efficient
front end for simulation-based inference.
\end{abstract}
\clearpage

\section{Introduction}
\label{sec:intro}

Parameter inference from high-dimensional data often relies on \emph{summary
statistics}: low-dimensional functions of observations that retain information
about parameters of interest. This choice is especially important in
simulation-based inference, where the likelihood $\log p(X|\theta)$ is
intractable but samples can be generated from a simulator. Under standard
regularity assumptions, the information carried by a summary statistic is
quantified by its Fisher information matrix $F(\theta)$. By the Cramér--Rao
bound, $F^{-1}$ lower bounds the covariance of unbiased estimators, and
maximizing $\log|F|$ minimizes the volume of the corresponding uncertainty
ellipsoid~\citep{tegmark1997measuring}. Designing Fisher-informative summaries
is therefore central to likelihood-free inference, from high-dimensional
statistics~\citep{giraud2014introduction} to cosmology, where numerical
simulations are routinely used to forecast and calibrate parameter constraints.

Topological Data Analysis (TDA) provides a useful class of such summaries
through \emph{persistence diagrams} (PDs). Built from persistent homology, PDs
encode multiscale geometrical and topological features of data, such as
connected components, loops, voids, and higher-dimensional cycles
~\citep{edelsbrunner2002topological,carlsson2009topology}. In particular, their stability 
under perturbations of the input
~\citep{cohen2007stability} makes them well-suited for inference, as well as
their interpretability, as they often pinpoint meaningful
structures in the data. In cosmology, for example, PD-based summaries have been
shown to capture non-Gaussian information in galaxy and weak-lensing fields
that is missed by classical two-point statistics \cite{biagetti2021persistence,heydenreich2021persistent}.

However, practical PD-based pipelines require several manual choices. One must
choose a filtration, which determines the ordering through which topological
features are born and die; a vectorization, which maps diagrams to Euclidean
features; and often a compressor, which reduces the final statistic to the
parameter dimension. These choices strongly affect downstream inference, but
are usually selected by hand rather than by a criterion tied to parameter
uncertainty. Recent work on differentiable persistent homology shows that PDs
and their vectorizations are almost everywhere differentiable with respect to
filtration values~\citep{carriere2021optimizing,leygonieFrameworkDifferentialCalculus2022},
making gradient-based tuning possible. What remains unclear is the appropriate
unsupervised objective for inference. This motivates our central question:
\textbf{How can we automatically tune topological summaries so that they
maximize Fisher information about the parameters of interest?}

\paragraph{Contributions.}
We introduce \emph{TopoFisher}, a framework for learning topological summary
statistics for simulation-based inference. Our contributions are:

\begin{enumerate}

    \item[$(i)$] We formulate persistence-based summary learning as
    Fisher-information maximization. The resulting pipeline can train
    filtrations, vectorizations, and compressors from simulations near a
    fiducial parameter value, without posterior samples or supervised
    regression targets.

    \item[$(ii)$] We provide a local well-posedness result for the
    Gaussian-Fisher loss of the learned summary: under boundedness,
    Lipschitz, and non-degeneracy assumptions, the negative log-determinant
    Gaussian-Fisher loss is locally Lipschitz in the trainable summary
    parameters.

    \item[$(iii)$] We validate the method on controlled benchmarks with known
    total Fisher information. On noisy spirals and Gaussian random fields,
    TopoFisher recovers a large fraction of the available information and
    consistently improves over fixed topological vectorizations.

    \item[$(iv)$] We show that TopoFisher is especially effective for weak
    gravitational lensing: it outperforms standard cosmological summaries,
    approaches the in-distribution Fisher information of an end-to-end Fisher-optimized IMNN, and is far more
    robust under simulator shift and posterior-level evaluation.

\end{enumerate}

\paragraph{Related work.}
TopoFisher connects Fisher-based summary learning with differentiable
topological data analysis. In simulation-based inference, high-dimensional data
are often compressed before posterior estimation~\citep{cranmer2020frontier}.
Classical optimal-compression methods preserve Fisher information locally
through linear projections or score-based summaries
~\citep{heavens2000massive,alsing2018generalized,alsing2018massive}, while
Information Maximising Neural Networks (IMNNs) train neural summaries near a
fiducial parameter value by maximizing Fisher information
~\citep{charnock2018automatic}. TopoFisher uses the same local Fisher
principle, but constrains the learned summary to factor through persistent
homology, yielding a structured and interpretable alternative to black-box
IMNN-style compression. In the experiments below, we therefore use an
unconstrained \IMNN{} as the main non-topological Fisher-neural reference.

Persistent homology provides stable multiscale topological
descriptors~\citep{edelsbrunner2002topological,carlsson2009topology,cohen2007stability}.
Many vectorizations have been proposed to map PDs to Euclidean
space~\citep{bubenik2015statistical,adams2017persistence,kusano2016persistence,
carriereSlicedWassersteinKernel2017,lePersistenceFisherKernel2018,
carriere2020perslay}, and recent work on differentiable persistence and
learnable filtrations enables gradient-based
optimization~\citep{hofer2017deep,gabrielsson2020topology,hofer2020graph,
carriere2021optimizing,leygonie2022framework,
hacquardTopologicallyPenalizedRegression2022}, mostly in supervised or
regularization-driven settings. TopoFisher instead trains topological
summaries by the Fisher information they retain.

Persistent homology has also been applied in physics, biology, medicine,
materials science, and related fields (see \cite{hensel2021survey} for a review).
%~\citep{10.1371/journal.pone.0192120,https://doi.org/10.1063/1.4737391,
%10.1073/pnas.1102826108,hensel2021survey,arXiv:1506.08903}. 
In cosmology,
PD-based summaries have been used for cosmic-web analysis, random-field
morphology, and weak-lensing inference
~\citep{pranav2017topology,pranav2019topology,biagetti2021persistence,
heydenreich2021persistent,heydenreich2022persistent,yip2024cosmology}.
These works typically fix the filtration and vectorization by hand; TopoFisher
learns them from simulations using the same Fisher criterion used to evaluate
parameter constraints.

\section{The TopoFisher Loss}
\label{sec:loss}

TopoFisher is a trainable summary map: data are filtered, converted into
persistence diagrams, vectorized into Euclidean features, and compressed to a
low-dimensional statistic. Instead of choosing these stages by hand, we optimize
their trainable parameters by maximizing the Fisher information of the
summary at a fiducial parameter value. This section introduces the required
ingredients and then defines the TopoFisher loss.

\paragraph{Persistent homology in brief.}
A key component of TopoFisher are \emph{persistence diagrams} (PDs), which are
built from the theory of \emph{persistent homology} (PH); see
Appendix~\ref{app:ph_formal} and~\cite{edelsbrunnerComputationalTopologyIntroduction2010,
oudotPersistenceTheoryQuiver2015} for more details.

Given a topological space $T$, the starting point of PH is to define a
\emph{filtration}, i.e., a finite sequence of growing subspaces
$T_1\subseteq\dots\subseteq T_N\subseteq T$. The filtration depends on the
data: for a point cloud, one can grow balls of increasing radius around the
points (\emph{Vietoris--Rips filtration}); for a grayscale image $f:P\to\R$
on a pixel grid, one can use sublevel sets $T_i=f^{-1}((-\infty,\alpha_i])$
for increasing thresholds (\emph{cubical
filtration})\footnote{Two standard realizations exist: the V-construction on
vertices (lower-star filtration) and the T-construction on top-dimensional
cubes; both are used in this work
(Appendix~\ref{app:arch_filtrations}).}.

PH then tracks the \emph{topological features} (connected components, loops,
cavities) that appear and disappear along the filtration, recording for each
a birth time $b_i$ and death time $d_i$. A \emph{persistence diagram}
summarizes this as a set of points $(b_i,d_i)\in\R^2$, with $\left\|d_i-b_i\right\|$ the
\emph{persistence} of the feature.

PDs enjoy two properties central to TopoFisher. They are
\emph{stable}~\cite{cohen2007stability}: close filtrations yield close
diagrams under appropriate metrics. And when a filtration is smoothly
parametrized by trainable parameters $\phi$, the resulting diagram
${\rm Dgm}_k(f_\phi)$ is piecewise differentiable in
$\phi$~\cite{carriere2021optimizing,leygonieFrameworkDifferentialCalculus2022},
enabling gradient-based optimization. Since the space of PDs is not Euclidean, downstream
pipelines apply a \emph{vectorization} (e.g.\ persistence
landscapes~\cite{bubenik2015statistical}, persistence
images~\cite{adams2017persistence}, or learnable layers such as
PersLay~\cite{carriere2020perslay}) before further processing.

\paragraph{Fisher Information and Summary Statistics.} 
For a parametric system with data $X$, a random variable taking values in a measurable space $\mathcal{X}$, let $\{p(\cdot|\theta)\}_{\theta \in \Theta}$ be a family of probability densities on $\mathcal{X}$, with $\Theta \subseteq \mathbb{R}^d$ open. The Fisher Information (FI) (\cite{fisher1925statistical,kendallstuart}) at a fiducial parameter $\theta_{\mathrm{fid}}$ is defined as:
\begin{equation}
\label{eq:total_fisher}
    F_X(\theta)
    =
    \mathbb{E}_\theta
    \left[
    \nabla_\theta \log p(X|\theta)
    \nabla_\theta \log p(X|\theta)^\top
    \right].
\end{equation}

We call $F_X(\theta)$ the total Fisher information, since it quantifies the information available in the uncompressed observation $X$: 
via the Cramér–Rao bound, under standard regularity conditions \citep{lehmann1998theory}, $(nF_X(\theta))^{-1}$ sets the matrix lower limit on the covariance of any unbiased estimator of $\theta$ based on $n$ i.i.d. samples.
Accordingly, the log-determinant $\log |F_X|$ provides a scalar measure of this total information.

In this work, we focus on complex physical systems where the likelihood $p(X|\theta)$ is often analytically intractable. This restricts our framework to a simulation-based regime, where we only assume the capacity to sample realizations $X \sim p(\cdot|\theta)$. In such cases, the high dimensionality of the data-space $\mathcal{X}$ renders direct inference or comparison prohibitive due to the curse of dimensionality \citep{alsing2018massive}. To circumvent this, we compress the data via a parametric summary mapping of the following form: $s_\phi \colon \mathcal{X} \to \R^M$ given by
\begin{equation}
  s_\phi \coloneqq C_\phi\circ V_\phi\circ \mathrm{Dgm}_k \circ f_\phi,
  \label{eq:pipeline}
\end{equation}
where $f_\phi$ is a learnable filtration, $\mathrm{Dgm}_k$\footnote{We present the following for a single homological dimension $k$; the case of using multiple dimensions follows immediately, concatenating the resulting vectors prior to the compression.} is the persistence diagram operator in dimension $k$, $V_\phi$ is a vectorization, and $C_\phi$ is a compressor to $\mathbb{R}^M$. The parameters $\phi \in \Phi \subseteq \mathbb{R}^p$ are learned and the target dimension $M \ll n$ is typically of the same order than the degrees of freedom of the system.

This mapping induces a distribution $q_\phi(\cdot|\theta)$ on the summary space. The FI of the transformed system, $F_\phi(\theta)$, is defined analogously to Eq. \eqref{eq:total_fisher} but w.r.t. the distribution $q_\phi$. Crucially, the Data Processing Inequality~\citep{cover1999elements} implies that $\log |F_\phi(\theta_{\mathrm{fid}})| \leq \log |F_X(\theta_\mathrm{fid})|$, meaning that the log-determinant of the total FI represents a global information ceiling. Our goal 
is thus to optimize $\phi$ to close this gap, so that $s_\phi$ preserves the maximum information relevant for inferring $\theta$.

\paragraph{Assumptions and Fisher estimation.}
To estimate the Fisher information of the learned summary, we use the standard
local Gaussian approximation
\[
q_\phi(\cdot|\theta) \simeq
\mathcal{N}\!\left(\mu_\phi(\theta),\Sigma_\phi\right)
\]
near the fiducial parameter $\theta_{\mathrm{fid}}$, with covariance treated as
locally independent of $\theta$. Under this approximation, the Gaussian Fisher
information reduces to
\begin{equation}
\label{eq:fisher_gauss}
F_\phi^{\mathrm G}(\theta_{\mathrm{fid}})
=
J_\theta \mu_\phi(\theta_{\mathrm{fid}})^\top
\Sigma_\phi^{-1}
J_\theta \mu_\phi(\theta_{\mathrm{fid}}),
\end{equation}
where $J_\theta$ indicates the Jacobian with respect to parameters $\theta$. In practice, we estimate Eq. \eqref{eq:fisher_gauss} from a structured batch \(\mathcal B\) of simulation data. Thus, in what follows, $F_\phi^{\mathrm G}$ denotes the population Gaussian-Fisher matrix of the summary, while $\widehat F_{\phi,\mathcal B}^{\mathrm G}$ denotes its finite-sample estimate. More details on the Fisher estimation are provided in Appendix \ref{app:gaussianity}.
\paragraph{Loss and gradients.} To maximize the information content of our topological summaries, we define the TopoFisher loss as:
\begin{equation} 
\boxed{ \widehat{\mathcal L}_{\mathcal B}(\phi) =
-\log \left| \widehat F_{\phi,\mathcal B}^{\mathrm G}(\theta_{\mathrm{fid}})\right|,
}
\label{eq:topofisherloss}
\end{equation}
which we minimize via stochastic subgradient descent with respect to $\phi$. This objective is computed using the Fisher matrix estimated from the batch $\mathcal{B}$, $\widehat F_{\phi,\mathcal B}^{\mathrm G}$. 
Although persistent homology is not differentiable everywhere, we leverage the subdifferentiable framework established by \citet{leygonie2022framework} and \citet{carriere2021optimizing}, which ensures well-defined gradients for optimization.

\section{Stability}
\label{sec:stability}

We identify minimal regularity conditions under which the log-determinant of
the Gaussian Fisher information of Eq.~\eqref{eq:fisher_gauss} is locally
Lipschitz in $\phi$. The result applies to any pipeline whose summary $s_\phi$
satisfies the assumptions below, not only persistence-based ones, and is
distinct from the classical regularity of $\theta\mapsto F_X^G(\theta)$ in the
physical parameters~\citep{Vaart_1998}. To our knowledge, no such analysis has
been made explicit for Fisher-optimized differentiable persistence pipelines.

\begin{assumption}
\label{ass:standing}
The summary map $s_\phi$, the covariance $\Sigma_\phi$, the Jacobian $J_\theta\mu_\phi(\theta_{\mathrm{fid}})$, and the Fisher information matrix $F_\phi^G(\theta_{\mathrm{fid}})$ satisfy the following:
\begin{enumerate}
    \item[$(i)$] $\|s_{\phi'}(X) - s_\phi(X)\| \leq L_s \|\phi' - \phi\|$ for all $\phi, \phi' \in \Phi$ and every realization $X$;
    \item[$(ii)$] $\sup_{\phi \in \Phi} \, \mathbb{E} \bigl[ \|s_\phi(X)\|^2 \bigr] \leq M^2$;
    \item[$(iii)$] $\|\Sigma_\phi^{-1}\|_{\mathrm{op}} \leq \kappa_\Sigma$ for all $\phi \in \Phi$;
    \item[$(iv)$] $\|J_\theta\mu_{\phi'}(\theta_{\mathrm{fid}}) - J_\theta\mu_\phi(\theta_{\mathrm{fid}})\|_{\mathrm{op}} \leq L_J \|\phi' - \phi\|$ for all $\phi, \phi' \in \Phi$;
    \item[$(v)$] $\lambda_{\min}(F_\phi(\theta_{\mathrm{fid}})) > 0$ for all $\phi \in \Phi$,
\end{enumerate}
\end{assumption}
where $\lambda_{\min}(A)$ denotes the smallest eigenvalue of $A$.
Under these assumptions, we obtain the following stability result, whose proof can be found in Appendix~\ref{app:stability_proof}.
\begin{theorem}
\label{thm:stability}
Let $F_\phi^G(\theta_{\mathrm{fid}})$ be the Fisher information matrix as in Eq.~\eqref{eq:fisher_gauss}, and let $\mathcal{L} \colon \Phi \to \mathbb{R}$ be defined as $\mathcal{L}(\phi) = -\log| F_\phi^G(\theta_{\mathrm{fid}})|$. Then, under Assumptions~\ref{ass:standing}(i)--(v), $\mathcal{L}$ is locally Lipschitz on $\Phi$. More precisely, for every $\phi \in \Phi$ there exist $\varepsilon_0(\phi) > 0$ and $K(\phi) > 0$, depending on $\kappa_\Sigma$, $L_s$, $M$, $L_J$, $\|J_\theta\mu_\phi(\theta_{\mathrm{fid}})\|_{\mathrm{op}}$, and $\lambda_{\min}(F_\phi^G(\theta_{\mathrm{fid}}))$, such that for all $\phi' \in \Phi$ with $\|\phi' - \phi\| < \varepsilon_0(\phi)$,
\begin{equation}
\label{eq:main}
    |\mathcal{L}(\phi') - \mathcal{L}(\phi)| \leq K(\phi)\,\|\phi' - \phi\|.
\end{equation}
\end{theorem}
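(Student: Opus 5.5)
The plan is to write $F_\phi := F_\phi^G(\theta_{\mathrm{fid}}) = J_\phi^\top \Sigma_\phi^{-1} J_\phi$ with $J_\phi := J_\theta\mu_\phi(\theta_{\mathrm{fid}})$, and to split the argument into two steps.

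The first step is to show that $\phi\mapsto F_\phi$ is Lipschitz in operator norm, with a constant depending only on the listed quantities.

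The second step is to transfer this to $-\log|F_\phi|$. On a neighbourhood where $F_{\phi'}$ stays uniformly positive definite, the log-determinant is smooth with derivative $A\mapsto \mathrm{tr}(F^{-1}A)$, which gives the Lipschitz bound.

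\emph{Covariance Lipschitz bound.} Write $\Sigma_\phi=\E[s_\phi s_\phi^\top]-\mu_\phi\mu_\phi^\top$ at the fiducial parameter. Then
\[
s_{\phi'}s_{\phi'}^\top-s_\phi s_\phi^\top=(s_{\phi'}-s_\phi)s_{\phi'}^\top+s_\phi(s_{\phi'}-s_\phi)^\top .
\]
Assumption~\ref{ass:standing}(i) bounds the difference factor pointwise by $L_s\|\phi'-\phi\|$. Cauchy--Schwarz with (ii) bounds $\E\|s_\phi\|\le M$. Together these give $\|\E[ss^\top]'-\E[ss^\top]\|_{\mathrm{op}}\le 2L_sM\|\phi'-\phi\|$. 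The same argument for the means, using $\|\mu_\phi\|\le M$ and $\|\mu_{\phi'}-\mu_\phi\|\le L_s\|\phi'-\phi\|$, yields
\[
\|\Sigma_{\phi'}-\Sigma_\phi\|_{\mathrm{op}}\le 4L_sM\|\phi'-\phi\| .
\]
The resolvent identity $\Sigma_{\phi'}^{-1}-\Sigma_\phi^{-1}=\Sigma_{\phi'}^{-1}(\Sigma_\phi-\Sigma_{\phi'})\Sigma_\phi^{-1}$, combined with (iii), then gives
\[
\|\Sigma_{\phi'}^{-1}-\Sigma_\phi^{-1}\|_{\mathrm{op}}\le 4\kappa_\Sigma^2L_sM\|\phi'-\phi\| .
\]

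\emph{Fisher matrix Lipschitz bound.} Telescope $F_{\phi'}-F_\phi$ into three terms:
\[
(J_{\phi'}-J_\phi)^\top\Sigma_{\phi'}^{-1}J_{\phi'}
+J_\phi^\top(\Sigma_{\phi'}^{-1}-\Sigma_\phi^{-1})J_{\phi'}
+J_\phi^\top\Sigma_\phi^{-1}(J_{\phi'}-J_\phi).
\]
By (iv), for $\|\phi'-\phi\|<1$ we have $\|J_{\phi'}\|_{\mathrm{op}}\le \|J_\phi\|_{\mathrm{op}}+L_J$. Hence $\|F_{\phi'}-F_\phi\|_{\mathrm{op}}\le C(\phi)\|\phi'-\phi\|$ with
\[
C(\phi)=2\kappa_\Sigma L_J(\|J_\phi\|_{\mathrm{op}}+L_J)+4\kappa_\Sigma^2L_SM(\|J_\phi\|_{\mathrm{op}}+L_J)^2,
\]
where $L_S$ here denotes $L_s$. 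This is where locality first enters.

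\emph{Log-determinant step.} This is the main obstacle, because nothing bounds $\lambda_{\min}(F_{\phi'})$ uniformly away from zero. I will get a uniform lower bound from Weyl's inequality. Set $\lambda:=\lambda_{\min}(F_\phi)>0$, which holds by (v), and take
\[
\varepsilon_0(\phi)=\min\{1,\lambda/(2C(\phi))\}.
\]
Then $\lambda_{\min}(F_{\phi'})\ge\lambda/2$ for $\|\phi'-\phi\|<\varepsilon_0$. Along the segment $F_t=F_\phi+t(F_{\phi'}-F_\phi)$, which stays in this positive-definite region by concavity of $\lambda_{\min}$, the mean value theorem gives
\[
|\log|F_{\phi'}|-\log|F_\phi||\le \sup_t|\mathrm{tr}(F_t^{-1}(F_{\phi'}-F_\phi))| \le d\,(2/\lambda)\,C(\phi)\|\phi'-\phi\| ,
\]
where $d$ is the parameter dimension. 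So $K(\phi)=2dC(\phi)/\lambda$ works.

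A subtlety I will flag is that the segment $F_t$ is not of the form $F_{\phi''}$. This does not matter, because Weyl applied to $F_t$ directly gives $\lambda_{\min}(F_t)\ge\lambda-t\|F_{\phi'}-F_\phi\|_{\mathrm{op}}\ge\lambda/2$.
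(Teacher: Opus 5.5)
Your proposal is correct and follows the paper's proof essentially step for step: the bound $\|\Sigma_{\phi'}-\Sigma_\phi\|_{\mathrm{op}}\le 4L_sM\|\phi'-\phi\|$ combined with the resolvent identity, then a Lipschitz bound on $\phi\mapsto F_\phi$, then Weyl's inequality with the mean value theorem for $t\mapsto\log\det(F_\phi+t\Delta F)$ and $|\mathrm{tr}(A)|\le d\|A\|_{\mathrm{op}}$. Your deviations tighten the argument rather than change the route: you apply (iii) directly at $\phi'$ instead of the paper's Neumann-series bound under $\kappa_\Sigma\delta_\Sigma<1$, which makes the $\Sigma^{-1}$ estimate global, and your restriction $\|\phi'-\phi\|<\min\{1,\lambda/(2C(\phi))\}$ makes $C(\phi)$ and $K(\phi)=2dC(\phi)/\lambda$ genuine constants, whereas the paper's $C_F$ and final bound carry factors $1/(1-\kappa_\Sigma\delta_\Sigma)$ and $1/(\lambda_{\min}-\|\Delta F\|_{\mathrm{op}})$ that are not uniformly bounded on its stated $\varepsilon_0$-ball.
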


\paragraph{Proof idea.}
The proof proceeds in three steps. First, Assumptions~\ref{ass:standing} $(i)-(iii)$ yield local Lipschitz continuity of $\phi \mapsto \Sigma_\phi^{-1}$ via the resolvent identity $A^{-1} - B^{-1} = A^{-1}(B - A)\,B^{-1}$. Combined with the Lipschitz Jacobian assumption~$(iv)$, this propagates to $\phi \mapsto F_\phi^G(\theta_{\mathrm{fid}})$. Finally, non-degeneracy~$(v)$ and Weyl's inequality keep $F_\phi^G(\theta_{\mathrm{fid}})$ uniformly positive definite along short segments, so that a standard log-determinant perturbation bound transfers the Lipschitz property to $\mathcal{L}$.

\paragraph{Well-posedness of gradient-based training.}
Theorem~\ref{thm:stability} concerns the loss $\mathcal{L}$; at training time we optimize the sample-based estimator $\widehat{\mathcal L}_{\mathcal B}$, which differs from $\mathcal{L}$ by finite-sample noise on $\widehat\Sigma_\phi$ and on the perturbed-point means, and by the truncation bias of the central-difference Jacobian. We do not formally transfer the regularity result to $\widehat{\mathcal L}_{\mathcal B}$; the impact of these effects is monitored empirically through the diagnostics of Appendix~\ref{app:gaussianity}. 
Note however that, upon using batches of larger sizes and more computational power, our estimated loss $\widehat{\mathcal L}_{\mathcal B}$ will get closer to $\mathcal{L}$, ensuring that our theoretical guarantees hold in practice. 

\section{Experiments}
\label{sec:experiments}

We evaluate TopoFisher on three benchmarks: noisy spiral point clouds and
$2$D Gaussian random fields, where the total Fisher information is known, and
weak gravitational lensing maps, a high-dimensional non-Gaussian inference
problem with no closed-form optimal summary. Across experiments, we compare
fixed TDA summaries, TopoFisher-optimized TDA summaries, and unconstrained
neural baselines. Performance is measured by $\logdetF$ and, when available,
by efficiency relative to the total Fisher information,
$\eta_{\log}=\log|\widehat F^G|/\log|F_X|$. Training and evaluation details are
given in Appendix~\ref{app:architectures}.
\subsection{Noisy spiral point clouds}
\label{sec:swiss_roll}
We consider 2D point clouds $X \in \mathbb{R}^{N \times 2}$ with $N=240$ points, sampled independently from a mixture of a uniform background on $[-h,h]^2$ (with probability $\lambda=40/240$) and a noisy spiral ($1-\lambda$). The spiral coordinates are defined by an angular variable $t \sim \mathcal{U}(0,4\pi)$ and a radius $r~\sim~\mathcal{N}(\mu t, w^2)$. The target parameters are $\theta=(\mu, w)$, where $\mu$ controls the radial expansion and $w$ the intrinsic width. For each fiducial configuration $\theta_\text{fid}$, we generate $n_s = n_d = 10{,}000$ samples for covariance and derivative estimation for each 5 independent seeds. Since the mixture density $p(X|\theta)$ is known, the total Fisher matrix $F_X$ admits an integral expression (see Appendix~\ref{app:swiss_roll_fisher} for a derivation), which we use to compute the efficiency metric $\eta_{\log}$.
\paragraph{Methods.} 
All methods share a common preprocessing step: for each point $x$ in the cloud, they operate on the $k$-nearest neighbor distances $d_k(x) \in \mathbb{R}^k$ to ensure global isometryUe invariance. 
Given a point cloud $X$, we first construct an Alpha complex $K$ to capture the connectivity of the points, and parametrize a scalar function $f_\phi$ on the vertices of the complex via two architectures:
\begin{itemize}
    \item \emph{TF-TDA-MLP}. The vertex values are defined as $f_\phi(x) = g_\phi(d_k(x))$, where $g_\phi$ is a Multi-Layer Perceptron processing each point locally.
    \item \emph{TF-TDA-GNN}. Features $d_k(x)$ are propagated across the 1-skeleton of $K$ using a Graph Neural Network to produce $f_\phi(x)$, better capturing relational structure.
\end{itemize}
To ensure a valid simplicial filtration, we extend $f_\phi$ to edges as $f_\phi(e_{ij}) = \|x_i - x_j\| + \max\{f_\phi(x_i), f_\phi(x_j)\}$, and to higher-dimensional simplices with a lower-star construction: $f_\phi(\sigma) = \max_{e \subset \sigma} f_\phi(e)$. The resulting persistence diagrams in homological dimensions $0$ and $1$ are converted into $8\times 8$ differentiable persistence images, flattened, and compressed to $\mathbb{R}^2$ via a small MLP. 

We compare these against: 
$(i)$ \emph{DTM}, a non-learnable TDA reference where $f_\phi$ is replaced by the fixed distance-to-measure function $g(d_k(x))=((1/k)\sum_j d_{k,j}(x)^2)^{1/2}$, while keeping the rest of the pipeline identical; 
$(ii)$ \emph{MLP-only} and \emph{GNN-only}, which serve as neural references. These models use the same architectures described above to learn a 2D function of the point clouds, but aggregate the local contributions of each point via sum-pooling, thereby bypassing the persistence computation.

\paragraph{Results.}
Table~\ref{tab:swiss_roll_results} reports results at $\theta_{\rm fid}=(\mu_{\rm fid},w_{\rm fid})=(0.6,0.1)$ for $k=100$. The TF-TDA-MLP pipeline recovers $78.2\%$ of the total log-determinant Fisher score, exceeding the fixed-filtration DTM baseline by $8.1\%$ and matching the unconstrained MLP-only model ($78.1\%$) within cross-seed variance. TF-TDA-GNN underperforms DTM, while GNN-only remains competitive but below MLP-only: at $k=100$ the local context of each point already contains enough information about the spiral geometry, and the additional relational complexity of message passing appears redundant for this inference task. Results at four additional fiducial configurations are reported in Appendix~\ref{app:swiss_roll_results} and reproduce the same ranking.

Beyond the quantitative match with the unconstrained model, the learned filtration is \emph{interpretable}: visualizations show that the MLP assigns vertex values that progressively ``unroll'' the spiral, effectively filtering out background noise and highlighting the manifold backbone, while the black box models do not show any visible structure in the learned function (see Figures \ref{fig:learned_filtration} and \ref{fig:swiss_roll_NN} in Appendix~\ref{app:swiss_roll_results}).

\begin{table}[h]
\caption{Spiral results at $\theta_{\rm fid}=(0.6,0.1)$ and $k=100$,
averaged over $5$ seeds. Constraint column reports
$(\sigma(\mu),\sigma(w))$. We prefix TopoFisher-trained summaries with
\emph{TF-}; DTM is a fixed-topology ablation, and MLP/GNN-only are
unconstrained non-topological references.}
\label{tab:swiss_roll_results}
\centering
\setlength{\tabcolsep}{4pt}
\renewcommand{\arraystretch}{1.10}
\begin{tabular*}{\textwidth}{@{\extracolsep{\fill}}lccc@{}}
\toprule
\textbf{Method} & $\log|F|$ & $(\sigma_\mu,\sigma_w)$ & $\eta_{\log}$  \\
\midrule
\textit{Total} 
& \textit{24.27} 
& \textit{(0.005, 0.001)} 
& \textit{100\%} \\
\midrule
\multicolumn{4}{@{}l}{\emph{Fixed topological ablation: persistence without TopoFisher learning}}\\
\midrule
DTM fixed filtration
& $17.18\pm 0.06$ 
& $(0.014, 0.013)$ 
& $70.1\%$ \\
\midrule
\multicolumn{4}{@{}l}{\emph{TopoFisher summaries: our learned topological contributions}}\\
\midrule
\textbf{TF-TDA-MLP}
& $\mathbf{18.99\pm 0.30}$ 
& $\mathbf{(0.013, 0.006)}$ 
& $\mathbf{78.2\%}$\\
TF-TDA-GNN
& $16.20\pm 0.19$ 
& $(0.016, 0.019)$ 
& $66.7\%$\\
\midrule
\multicolumn{4}{@{}l}{\emph{Unconstrained neural references: non-topological summaries}}\\
\midrule
MLP-only
& $18.96\pm 0.58$ 
& $(0.012, 0.006)$ 
& $78.1\%$\\
GNN-only
& $18.26\pm 0.25$ 
& $(0.015, 0.007)$ 
& $75.2\%$ \\
\bottomrule
\end{tabular*}
\end{table}

\subsection{Gaussian random fields}
\label{sec:grf}

We use two-dimensional Gaussian random fields (GRFs) as a controlled benchmark
because their likelihood is known and the Fisher information in the full field
can be computed analytically. Each sample is a zero-mean periodic field on a
$64\times64$ grid. In Fourier space, the modes are independent Gaussian random
variables with variance given by the power spectrum $P(k)=A_s\left(k/\kpiv\right)^{-B}$, where $A_s$ controls the overall amplitude of the field, while $B$ controls how
power is distributed across spatial scales. We choose the pivot scale
$\kpiv=\exp\langle \ln k\rangle$, where the average is over the nonzero Fourier
modes, so that the Fisher correlation between $A_s$ and $B$ vanishes at the
fiducial point. We consider fiducial parameters
$\theta_{\rm fid}=(A_s,B)=(1,B_0)$ with
$B_0\in\{-2,-1,0,1,2\}$, use finite-difference steps
$\Delta\theta=(0.1,0.1)$, and generate $n_s=n_d=40{,}000$
simulations per configuration over $5$ independent seeds. The standard
Gaussian-field Fisher formula, derived in Appendix~\ref{app:totalGRF}, gives
the same total information for all values of $B_0$:
$\log |F_X|=13.89$, corresponding to
$\sigma(A_s)=0.022$ and $\sigma(B)=0.044$. This value provides an absolute
information ceiling against which all learned and fixed summaries can be
compared.
\paragraph{Methods.}
We compare fixed and learnable topological summaries, together with a
non-topological CNN baseline. All TDA methods compute $H_0$ and $H_1$ cubical
persistence on periodic fields. Fixed-filtration methods use the raw field as
the cubical filtration, whereas learnable-filtration methods first transform the
field with a small CNN and then compute cubical persistence on the CNN output.
All resulting feature vectors are compressed to two dimensions with
MOPED~\citep{heavens2000massive}, a locally linear Fisher-preserving Gaussian
compression that is detailed in Appendix~\ref{app:gaussianity}.

\emph{Fixed topological baselines.}
As a topological baseline without TopoFisher optimization, we use a fixed cubical filtration with three hand-designed vectorizations:
$(i)$ persistence images~\citep{adams2017persistence} of size $8\times8$ with
persistence weighting and bandwidth $1$; $(ii)$ uniformly weighted persistence
silhouettes~\citep{chazalStochasticConvergencePersistence2014} on $50$ grid points; and $(iii)$ differentiable persistence curves of
birth and death values~\citep{biagetti2021persistence}, also on $50$ grid
points. We refer to these as Cubical-PI, Cubical-Silhouette, and
Cubical-Curves, respectively. Full definitions are given in
Appendix~\ref{app:arch_vectorizations}.

\emph{Learnable topological summaries.}
Cubical-PersLay keeps the cubical filtration fixed but learns the diagram
vectorization with PersLay~\citep{carriere2020perslay}, a permutation-invariant
set network applied to persistence diagram points. We use a $16$-dimensional
point embedding followed by a $32$-hidden-unit MLP for each homology class, with
spectral normalization for Lipschitz control. TF-CNN-PersLay additionally prepends
a two-layer CNN with $8$ hidden channels and $3\times3$ kernels, yielding a
learned filtration before the same PersLay vectorization and MOPED compression.

\emph{Non-topological Fisher-neural baseline.}
As a black-box reference, we use an \IMNN{}~\citep{charnock2018automatic}: a
strided convolutional encoder followed by a dense head that directly outputs
two summaries, one per parameter. The persistence and vectorization stages are
replaced by the identity, and no MOPED compression is applied; the dense head is
the learned compressor. The \IMNN{} is trained end-to-end with the same local
Gaussian Fisher objective used by TopoFisher, making it a direct comparison
between an unconstrained Fisher-optimized neural summary and a topologically
structured Fisher-optimized summary. Architectural details are provided in
Appendix~\ref{app:arch_filtrations}.
 
\emph{Total Fisher Information baseline.} The total Fisher of a power-law GRF on a fixed $N\times N$ grid does not depend on $B_0$ (see a derivation in Appendix~\ref{app:totalGRF}): both diagonal entries of $F(\theta)$ are functions only of the lattice mode structure and of $A_s$. Each of the five $B_0$ values therefore provides an independent benchmark with the same theoretical ceiling $\log |F_{X}|=13.89$, and the natural figure of merit for a method is its mean performance across this range. 
\begin{table}[h]
\caption{GRF results at $N=64$, averaged over
$B_0\in\{-2,-1,0,1,2\}$ and $5$ seeds per $B_0$. Constraint column reports
$(\sigma(A_s),\sigma(B))$. The total Fisher information is
$\log|F_X|=13.89$ at every $B_0$. We prefix TopoFisher-trained summaries with
\emph{TF-}.}
\label{tab:grf_results}
\centering
\setlength{\tabcolsep}{4pt}
\renewcommand{\arraystretch}{1.10}
\begin{tabular*}{\textwidth}{@{\extracolsep{\fill}}lccc@{}}
\toprule
\textbf{Method} & $\log|F|$ & $(\sigma_{A_s},\sigma_B)$ & $\eta_{\log}$ \\
\midrule
\textit{Total} 
& \textit{13.89} 
& \textit{(0.022, 0.044)} 
& \textit{100\%} \\
\midrule
\multicolumn{4}{@{}l}{\emph{Fixed topological ablations: persistence without TopoFisher 
learning}}\\
\midrule
Cubical-PI
& $11.53 \pm 0.56$ 
& $(0.037, 0.093)$ 
& $83.0\%$  \\
Cubical-Silhouette
& $10.68 \pm 0.55$ 
& $(0.046, 0.129)$ 
& $76.9\%$  \\
Cubical-Curves
& $9.22 \pm 1.28$ 
& $(0.086, 0.292)$ 
& $66.4\%$  \\
\midrule
\multicolumn{4}{@{}l}{\emph{TopoFisher summaries: our learned topological contributions}}\\
\midrule
\textbf{TF-Cubical-PersLay}
& $\mathbf{12.63 \pm 0.37}$ 
& $\mathbf{(0.032, 0.061)}$ 
& $\mathbf{90.9\%}$ \\
TF-CNN-PersLay
& $12.07 \pm 0.15$ 
& $(0.034, 0.071)$ 
& $87.7\%$ \\
\midrule
\multicolumn{4}{@{}l}{\emph{Unconstrained Fisher-neural reference: non-topological summary}}\\
\midrule
\IMNN{}
& $12.29 \pm 0.09$ 
& $(0.032, 0.067)$ 
& $88.4\%$  \\
\bottomrule
\end{tabular*}
\end{table}
\paragraph{Results.}
Table~\ref{tab:grf_results} reports the mean performance of each pipeline on
the GRF benchmark. The best method is  TF-Cubical-PersLay, which reaches $90.9\%$
of the total log-determinant Fisher score and outperforms all fixed-vectorization
topological baselines by a large margin. The learnable-filtration variant
TF-CNN-PersLay also improves over the fixed-vectorization baselines, but does not
surpass  TF-Cubical-PersLay. We discuss this aspect in the next section.

The non-topological \IMNN{} baseline reaches $88.4\%$ of the total Fisher
information, comparable to the learnable topological methods. The topological
pipelines, however, achieve this performance with roughly $55\times$ fewer
trainable parameters: TF-Cubical-PersLay has $1{,}376$ trainable parameters
and TF-CNN-PersLay has $2{,}049$, compared with approximately $7.5\times10^4$
for the GRF \IMNN{}. This suggests that the persistence-based inductive bias
provides a parameter-efficient route to high-Fisher summaries without
sacrificing constraining power.
 
\subsection{Weak gravitational lensing}
\label{sec:lensing}

Weak gravitational lensing measures the convergence field $\kappa$, the
line-of-sight projection of matter density inferred from coherent distortions
of background-galaxy shapes. Its statistics constrain cosmological parameters
such as $(\Om,\sig)$, which control the matter density and amplitude of
structure. Because $\kappa$ is highly non-Gaussian on the scales of interest,
the angular power spectrum $C_\ell$ is suboptimal and no closed-form optimal
summary is known~\citep{kilbinger2015cosmology,ribli2019weak,cheng2020new,allys2020new}.
This makes weak lensing a natural testbed for learned summary statistics. We use the \texttt{sbi\_lens} lognormal and Lagrange Perturbation Theory (LPT)
simulators~\citep{lanusse2023sbi_lens} to generate $512\times512$ convergence
maps over a $10^\circ\times10^\circ$ field of view and five tomographic
redshift bins. For each of five independent seeds, we generate $20{,}000$ maps
at the fiducial cosmology and at each finite-difference point. All maps are
Gaussian-smoothed with $\sigma=2'$ before applying any summary statistic.

We report three evaluations in Table~\ref{tab:lensing_results}. Local Fisher
measures Gaussian Fisher information on lognormal maps at the fiducial
cosmology. Transfer Fisher trains learned feature extractors on lognormal maps,
freezes them, and evaluates on LPT maps. For MOPED-based summaries we
re-estimate the covariance, finite-difference Jacobian, precision matrix, and
MOPED compressor on the LPT evaluation bank; for the \IMNN{} the frozen
two-dimensional network output is used directly and only the Fisher statistics
are re-estimated. No \IMNN{}, CNN-filtration, or PersLay weights are retrained
on LPT. Finally, NPE posterior performance is measured by training a neural
posterior estimator over a prior and reporting the posterior covariance of
$(\Om,\sig)$. In all Fisher evaluations, we use an additive auto-bin
approximation across the five tomographic bins, rather than a full joint
tomographic Fisher matrix with cross-bin covariances.
 
\paragraph{Methods.} As the total Fisher is unknown, we compare against
three cosmological baselines: $(i)$ the angular power spectrum of the
log-transformed field, $\log(C_\ell)$; $(ii)$ peak counts, a histogram of
local convergence maxima sensitive to halo-like over-densities; and $(iii)$
the wavelet scattering transform (WST)~\citep{mallat2012group}, a state-of-the-art non-Gaussian summary built from a cascade of wavelet
convolutions and pointwise moduli. We use the same learnable topological
pipelines as Section~\ref{sec:grf}, adapted to $512^2$ inputs, and the same
\IMNN{} reference (CNN encoder + dense two-dimensional summary head, trained
end-to-end by the local Gaussian Fisher objective).
 
\paragraph{Results.}
Table~\ref{tab:lensing_results} summarizes the local Fisher, transfer Fisher,
and NPE posterior evaluations. In distribution, all topological summaries
outperform the two-point baseline $\log(C_\ell)$, which reaches only
$\log|F|=13.8$ with marginal constraints
$(\sigma_{\Om},\sigma_{\sig})=(0.039,0.059)$. This confirms that persistent
homology captures non-Gaussian information in the smoothed convergence field that is inaccessible to the power spectrum. %Among fixed topological
% vectorizations, Cubical-PI performs best ($\log|F|=19.17$), comparable to the
% wavelet scattering baseline ($19.27$) and above peak counts ($17.14$). 
Learning the diagram vectorization gives a further substantial gain: both
TopoFisher variants, TF-Cubical-PersLay and TF-CNN-PersLay, reach
$\log|F|\approx 21.5$ with marginal constraints near $(0.008,0.008)$,
within $\sim 0.9$ nats of the unconstrained \IMNN{} reference
($\log|F|=22.4$) trained end-to-end with the identical Fisher objective.
Crucially, this near-parity is achieved while retaining an explicitly
topological representation and using up to $\sim 80\times$ fewer
trainable parameters than the \IMNN{} (see
Appendix~\ref{app:arch_filtrations} for parameter counts).

A consistent pattern across both lensing and the GRF benchmark is that
learning the vectorization alone is at least as effective as jointly
learning a CNN filtration and a PersLay vectorization, and is
substantially more robust. On lensing, TF-Cubical-PersLay and
TF-CNN-PersLay are statistically indistinguishable in local Fisher
($21.51\!\pm\!0.02$ versus $21.48\!\pm\!0.13$). Under simulator shift, TF-Cubical-PersLay retains $\log|F|=19.2$
while TF-CNN-PersLay collapses to $12.7$, a degradation comparable to
that of fixed-vectorization topological baselines such as Cubical-PI
($19.2 \to 10.8$). The same ranking persists at the posterior level:
TF-Cubical-PersLay reaches $\log|F|=16.7$ on NPE while TF-CNN-PersLay
drops to $14.7$. We attribute this pattern to the cubical
sublevel filtration being already well matched to scalar cosmological fields:
persistence pairs of the raw convergence map directly encode the prominence
and nesting of peaks, voids, and surrounding structures. Adding a learnable
CNN filtration enlarges the search space, but many directions mainly
reparameterize filtration values or alter persistence pairings in
non-smooth, simulator-specific ways, rather than increasing the information
ceiling. For these fields, the most effective TopoFisher design is therefore
modular: keep the physically meaningful cubical filtration fixed and learn
how to aggregate its diagram points.

The transfer and NPE evaluations sharpen this picture. Under
lognormal-to-LPT shift the \IMNN{} drops from $\log|F|=22.4$ to $8.9$ with
weak constraints $(0.19,0.30)$, while TF-Cubical-PersLay retains $19.2$, the
best transfer determinant by a wide margin. On NPE, despite its higher local
Fisher, the \IMNN{} yields the weakest posterior ($14.3$), whereas
TF-Cubical-PersLay achieves the tightest constraints overall: $\log|F|=16.7$
with $(\sigma_{\Om},\sigma_{\sig})=(0.03,0.04)$. Taken together, these results indicate that Fisher optimization alone
is not sufficient: the unconstrained \IMNN{}, despite its higher
in-distribution Fisher information, transfers poorly and yields the weakest
posterior constraints, whereas the same Fisher objective combined with a
topological inductive bias produces summaries that retain most of their
information under simulator shift and dominate at the posterior level. The
gain attributable to topology, separated from Fisher optimization itself, is
the difference between TF-Cubical-PersLay and the \IMNN{} along the transfer
and NPE axes. Further details %on the results 
can be found in Appendix \ref{app:lensing_versions}.
\begin{table}[h]
\caption{Weak lensing at $\sigma=2'$ smoothing for $(\Om,\sig)$, additive over
$5$ tomographic bins. Constraint columns report
$(\sigma(\Om),\sigma(\sig))$. Transfer freezes the lognormal-trained feature
extractors and evaluates Fisher information on LPT maps. MOPED/Fisher
statistics are re-estimated on the LPT evaluation bank when applicable; the
\IMNN{} uses its frozen 2-D output directly. Row groups separate
external baselines, ablations, TopoFisher contributions, and the unconstrained
Fisher-neural reference.}
\label{tab:lensing_results}
\centering
\scriptsize
\setlength{\tabcolsep}{2.4pt}
\renewcommand{\arraystretch}{1.08}
\begin{tabular*}{\textwidth}{@{\extracolsep{\fill}}lcccccc@{}}
\toprule
& \multicolumn{2}{c}{\textbf{Local Fisher}} 
& \multicolumn{2}{c}{\textbf{Transfer Fisher}} 
& \multicolumn{2}{c}{\textbf{NPE posterior}} \\
\cmidrule(lr){2-3}\cmidrule(lr){4-5}\cmidrule(l){6-7}
\textbf{Method}
& $\log|F|$
& $(\sigma_{\Om},\sigma_{\sig})$
& $\log|F|$
& $(\sigma_{\Om},\sigma_{\sig})$
& $\log|F|$
& $(\sigma_{\Om},\sigma_{\sig})$ \\
\midrule
\multicolumn{7}{@{}l}{\emph{Cosmology baselines}}\\
\midrule
$\log(C_\ell)$
& $13.78\!\pm\!0.08$ & $(0.039, 0.059)$
& $12.87 \pm 0.05$ & $(0.046, 0.086)$
& $15.77 \pm 0.05$ & $(0.042, 0.052)$ \\
Peak counts
& $17.14\!\pm\!0.01$ & $(0.026, 0.020)$
& $11.61 \pm 0.07$ & $(0.119, 0.165)$
& $15.30 \pm 0.20$ & $(0.038, 0.055)$ \\
Wavelet scattering
& $19.27 \pm 0.01$ & $(0.013, 0.011)$
& $12.55 \pm 0.04$ & $(0.044, 0.090) $
& $16.50 \pm 0.39$ & $(0.034,0.051)$ \\
\midrule
\multicolumn{7}{@{}l}{\emph{Fixed topological ablations: persistence without TopoFisher learning}}\\
\midrule
Cubical-Silhouette
& $16.37\!\pm\!0.02$ & $(0.029, 0.031)$
& $12.43\pm 0.01$ & $(0.082, 0.128)$
& $15.76 \pm 0.42$ & (0.039, 0.054) \\
Cubical-Curves
& $18.08\!\pm\!0.01$ & $(0.019, 0.016)$
& $1.98 \pm 0.60$ & $(1.340, 2.052)$
& $12.83 \pm 0.24$ & (0.071, 0.097) \\
Cubical-PI
& $19.17\!\pm\!0.01$ & $(0.014,0.012)$
& $10.84 \pm 0.02$ & $(0.175, 0.271)$
& $15.11 \pm 0.60$ & $(0.041,0.064)$ \\
\midrule
\multicolumn{7}{@{}l}{\emph{TopoFisher summaries: our learned topological contributions}}\\
\midrule
TF-Cubical-PersLay
& $21.51\!\pm\!0.02$ & $(0.008,0.009)$
& $\mathbf{19.24\!\pm\!0.02}$ & $\mathbf{(0.063,0.095)}$
& $\mathbf{16.72 \pm 0.51}$ & $\mathbf{(0.031,0.044)}$ \\
TF-CNN-PersLay
& $21.48\!\pm\!0.13$ & $(0.007,0.005)$
& $12.72 \pm 0.16$ & $(0.073, 0.110)$
& $14.74 \pm 0.59$ & $(0.051,0.060)$ \\
\midrule
\multicolumn{7}{@{}l}{\emph{Unconstrained Fisher-neural reference: black-box non-topological summary}}\\
\midrule
\IMNN{}
& $\mathbf{22.35\!\pm\!0.05}$ & $\mathbf{(0.006,0.004)}$
& $8.92\!\pm\!0.90$ & $(0.247,0.391)$
& $14.32 \pm 0.33$ & $(0.050,0.077)$ \\
\bottomrule
\end{tabular*}
\end{table}
\section{Conclusions}
\label{sec:conclusions}

We introduced TopoFisher, a framework for learning persistence-based summary
statistics by maximizing the local Gaussian Fisher information of the final
summary, turning filtration, vectorization, and compression choices into
trainable components while constraining the summary to factor through stable
topological descriptors. We complemented this with a local well-posedness
result: under mild boundedness, Lipschitz, and non-degeneracy conditions, the
negative log-determinant Gaussian-Fisher loss is locally Lipschitz in the
trainable parameters.

The central empirical finding is that Fisher optimization and topology play
complementary roles: the unconstrained \IMNN{} achieves the highest
in-distribution Fisher information but transfers poorly and yields the
weakest posterior, whereas the same objective combined with a topological
inductive bias produces summaries that generalize and dominate at the
posterior level with $50-100\times$ fewer parameters. The
topological factorization is what makes Fisher-based summary learning robust
enough for simulation-based inference.

TopoFisher is a \emph{local} summary-design method: it optimizes Fisher
information near a fiducial point under a Gaussian summary approximation, and
provides no guarantee of globally injective summaries or calibrated posteriors
over a wide prior volume. A natural extension includes learning
summaries jointly across multiple fiducials to broaden the regime of validity.

\section*{Code availability}

The code used to run the TopoFisher experiments, generate configurations, train
the models, and reproduce the Fisher, transfer, and simulation-based inference
evaluations is available at
\url{https://github.com/RitAreaSciencePark/TopoFisher}.
The repository includes instructions for regenerating the simulation datasets
and reproducing the main tables and figures. Large generated simulation
datasets are not included in the repository because of their size, but they can
be regenerated from the documented scripts and configuration files.

\section*{Acknowledgements}

M.B. and E.M.F. are supported by the Ministero Università e Ricerca, project E-ARGO” (CUP J95F21002190001), with title “Fondo finalizzato al rilancio degli Investimenti delle Amministrazioni Centrali dello Stato e allo sviluppo del Paese, ex Art.1, comma 14, legge n. 160/2019". M.C. and F.C. are supported by l’Agence Nationale de la Recherche (ANR) under grants TopModel ANR-23-CE23-0014. MC is also supported by 3IA ANR-23-IACL-0001. S.C.H. is supported by the European Union – NextGenerationEU within the project PNRR “Finanziamento di progetti presentati da giovani ricercatori" - Mission 4 Component 2 investment 1.2.

We thank Area Science Park supercomputing platform ORFEO made available for conducting the research reported in this paper and the technical support of the Laboratory of Data Engineering staff.

\bibliographystyle{unsrtnat}
\bibliography{references}

\appendix

\section{Persistent homology: preliminaries}
\label{app:ph_formal}

A key component of our summary statistic, the persistence diagram, is defined through the theory of persistent homology. In this section, we recall the basics of homology groups and persistence theory.

\paragraph{Simplicial homology.}\label{app:simplicial_homology}

In this section, we briefly review the fundamentals of simplicial homology with coefficients in $\mathbb{Z}/2\mathbb{Z}$, which is the setting adopted for practical computations. 
For a more comprehensive introduction, we refer the reader to~\citep[Chapter 1]{munkresElementsAlgebraicTopology1984}. 
The basic building blocks of simplicial (persistent) homology are \emph{simplicial complexes}, which provide combinatorial representations of topological spaces that can be efficiently manipulated algorithmically.

\begin{definition}
	Given a finite set of points $X_n := \{x_1,\dots,x_n\}$ sampled from a topological space $X$, an \emph{abstract simplicial complex} constructed from $X_n$ is a collection $S(X_n)$ of subsets of $X_n$ such that:
	\begin{itemize}
		\item if $\tau \in S(X_n)$ and $\sigma \subseteq \tau$, then $\sigma \in S(X_n)$, and
		\item if $\sigma, \tau \in S(X_n)$, then either $\sigma \cap \tau \in S(X_n)$ or $\sigma \cap \tau = \varnothing$.
	\end{itemize}
	Each element $\sigma \in S(X_n)$ is called a \emph{simplex}, and its \emph{dimension} is defined by 
	${\rm dim}(\sigma) := {\rm card}(\sigma) - 1$. 
	Simplices of dimension $0$ are referred to as \emph{vertices}.
\end{definition}

A central operator in simplicial homology is the \emph{boundary operator}, which maps a simplex to the formal sum of its faces. 
Such formal sums are called \emph{chains}, and the collection of chains forms a group, denoted by $Z_*(S(X_n))$.

\begin{definition}
	Given a simplex $\sigma = [x_{i_1}, \dots, x_{i_p}]$, the boundary operator $\partial$ is defined as
	\[
	\partial(\sigma) := \sum_{j=1}^p [x_{i_1}, \dots, x_{i_{j-1}}, x_{i_{j+1}}, \dots, x_{i_p}].
	\]
	In other words, $\partial(\sigma)$ is obtained by removing one vertex at a time from $\sigma$. 
	This definition extends linearly to arbitrary chains.
\end{definition}

A topological feature is represented by a \emph{cycle}, that is, a chain whose boundary vanishes. 
Working over $\mathbb{Z}/2\mathbb{Z}$, this condition corresponds to requiring that each simplex in the boundary appears an even number of times, so that duplicates cancel out. 
Formally, a chain $c$ is a cycle if $\partial(c) = 0$.

It follows directly that $\partial \circ \partial = 0$, meaning that the boundary of any chain is itself a cycle. 
However, cycles that arise as boundaries of higher-dimensional chains are considered \emph{trivial}, and should be excluded. 
The set of such chains forms a subgroup denoted by $B_*(S(X_n))$.

\begin{definition}
	The $k$-th homology group is defined as the quotient
	\[
	H_k(S(X_n)) := \frac{Z_k(S(X_n))}{B_k(S(X_n))}.
	\]
	Equivalently, $H_k(S(X_n))$ consists of equivalence classes of $k$-dimensional cycles modulo boundaries. 
	In fact, when working over $\mathbb{Z}/2\mathbb{Z}$, this group is a vector space.
\end{definition}

Hence, homology groups are vector spaces computed on simplicial complexes built on finite data sets, and whose bases encode the different cycles of the complex which are not boundaries. Intuitively, these cycles represent topological features, whose types depend on the homology group degree: connected components ($k=0$), loops ($k=1$), cavities ($k=2$), and their higher-dimensional counterparts ($k>2$).
In practice, simplicial complexes can be built on $(a)$ point clouds with the so-called Vietoris-Rips complexes $V_\delta(X_n)$ (by creating simplices whose corresponding vertices are points that are all at distance at most $\delta$ from each other), and $(b)$ images with the so-called cubical complexes $C(I)$ (by creating simplices whose corresponding vertices are neighboring pixels).
\paragraph{Persistent homology.} The core objective of persistent homology is to extract \emph{persistent} topological information from \emph{filtrations}, i.e., sequences of growing simplicial complexes. 

\begin{definition}
Let $S=S(X_n)$ be a finite simplicial complex. 
A \emph{filtration} of $S$ is a family $(S_t)_t$ of subcomplexes such that $S_p \subseteq S_q$ whenever $p \leq q$. 
For each simplex $\sigma \in S$, define its \emph{insertion time}, or \emph{filtration value}, as $t(\sigma) \coloneqq \inf \{t \mid \sigma \in S_t\}$.
\end{definition}

From a topological viewpoint, the addition of a simplex $\sigma$ to the filtration produces exactly one of two effects: either it gives rise to a new topological feature (for instance, inserting an edge may create a loop), hence increasing the dimension of the homology group in degree ${\rm dim}(\sigma)$ of the subcomplex, or it eliminates an existing feature of lower dimension (for example, connecting two previously disjoint components), hence decreasing the dimension of the homology group in degree ${\rm dim}(\sigma)-1$. 
Using a matrix reduction algorithm that leverages the vector space structure of homology groups \cite[\S IV.2]{edelsbrunner2010computational}, persistent homology associates to each feature a \emph{critical pair} of simplices $(\sigma_b, \sigma_d)$ corresponding to its creation and destruction, together with the associated \emph{birth} and \emph{death} times $t(\sigma_b)$ and $t(\sigma_d)$.\footnote{Some features may never be destroyed; in that case, their death time is set to $+\infty$.}

\begin{definition}
	The resulting collection of intervals $(t(\sigma_b), t(\sigma_d))$ in homological dimension $k$ forms a finite subset of the half-plane $\{(b,d) \in \mathbb{R}^2 \mid b < d\}$ known as the \emph{$k$-persistence diagram} (PD) of the filtration $(S_t)_t$. 
	The quantity $\frac{1}{\sqrt{2}}|t(\sigma_b) - t(\sigma_d)|$, corresponding to the distance to the diagonal $\{b = d\}$, is called the \emph{persistence} of the feature and measures how long it remains detectable across scales.
\end{definition}

In the case of Vietoris-Rips complexes, it is common to build filtrations $\{V_{\delta}(X_n)\}_{\delta > 0}$ by letting their corresponding neighborhood threshold $\delta$ increase, and in the case of images, it is common to use the growing sublevel sets of the pixel value function $\{f^{-1}((-\infty,\alpha])\}_{\alpha\in\R}$, where an image $P$ is represented as a function $f:P\to\R$ valued on the grid $P$ of pixels.

Very importantly, PDs are \emph{stable}~\citep{cohen2007stability}: when computed from the sublevel sets of two continuous functions $f,g$ defined on the same space, PDs can be compared with the so-called bottleneck distance $d_b$ (which is similar to an optimal transport metric).

\begin{definition}[bottleneck distance]\label{def:pd-dist}
	Let ${\rm Dgm}$ and ${\rm Dgm}'$ be two PDs. The \emph{bottleneck distance} $d_b$ between PDs is defined as:
	\begin{equation}
		d_b({\rm Dgm},{\rm Dgm}'):=\inf_{P\in\mathcal{P}({\rm Dgm},{\rm Dgm}')} c(P),
	\end{equation} 
	where $\mathcal{P}({\rm Dgm},{\rm Dgm}')$ denotes the set of \emph{partial correspondences}, i.e., the set of subsets of ${\rm Dgm}\times{\rm Dgm}'$ s.t. the first and second projections $\pi_1:(p,p')\mapsto p$ and $\pi_2:(p,p')\mapsto p'$ are injective, and where the \emph{cost} of a partial correspondence $P$ is defined as:
	\begin{equation}\label{eq:cost}
		c(P):=\max\{\max_{(p,p')\in P}\|p-p'\|_\infty, 
		\max_{p\not\in{\rm im}(\pi_1)} \|p-\pi_\Delta(p)\|_\infty,
		\max_{p'\not\in{\rm im}(\pi_2)} \|p'-\pi_\Delta(p')\|_\infty\}.
	\end{equation}
\end{definition}

Then, the main result of PH states that:
$$d_b({\rm Dgm}_k(f), {\rm Dgm}_k(g))\leq \|f-g\|_\infty.$$

\paragraph{Vectorizations and Differentiability.} Finally, PDs are often represented by vectors to make amenable to standard computations (addition, inner product, etc).

\begin{definition}
	A \emph{vectorization} of PDs is a continuous function $\varphi:\mathcal D\to \mathcal H$, where $\mathcal D$ is the space of PDs, and $\mathcal H$ is a Hilbert space.
\end{definition}

Common representations include, e.g., the persistence landscape~\cite{bubenik2015statistical}, the persistence image~\cite{adams2017persistence}, and neural network-based vectorizations such as PersLay~\cite{carriere2020perslay}.
Finally, a recent property of PDs is their \emph{differentiability}: as every PD point is associated to a critical pair, the map $f\mapsto{\rm Dgm}_k(f)$ is piecewise smooth in $f$. Indeed, as the critical pairs only depend on the filtration order, the combinatorial pairing is locally constant, and PDs can be built locally by picking the insertion times at the critical pair indices in the filtration. This gather-based strategy (described initially in~\citet{carriere2021optimizing}) can be used in a gradient descent optimization scheme based on backpropagation by computing the critical pairs once in the forward pass and reading the filtration values off the input tensor through differentiable indexing on the backward pass. 

\section{More details on Fisher estimation}
\label{app:gaussianity}

We estimate the Gaussian Fisher loss $F_\phi^{\mathrm G}(\theta_\text{fid})$ at fiducial parameter with structured batches $\mathcal{B}$. A batch $\mathcal{B}$ includes realizations at $\theta_{\mathrm{fid}}$ to estimate $\Sigma_\phi$ via sample covariance, to which we apply the Hartlap correction~\citep{hartlap2007your} for unbiasedness of the precision matrix, as well as realizations at infinitesimal perturbations around $\theta_{\mathrm{fid}}$ to compute $J_\theta \mu_\phi(\theta_{\mathrm{fid}})$ via central finite differences.
These empirical estimates, denoted as $\widehat{\Sigma}_{\phi, \mathcal{B}}$ and $\widehat{J}_{\theta, \mathcal{B}}$ respectively, are combined to evaluate the Fisher matrix through the quadratic form:
\begin{equation}
\label{eq:empirical_gaussian_fisher}
\widehat F_{\phi,\mathcal B}^{\mathrm G}(\theta_{\mathrm{fid}}) =
\widehat{J}_{\theta, \mathcal B}^{\top}
\widehat P_{\phi,\mathcal B}
\widehat{J}_{\theta, \mathcal B},
\end{equation}
where $\widehat P = \frac{n_s - M - 2}{n_s - 1} \widehat\Sigma^{-1}$ with the condition $n_s > M + 2$ represents the Hartlap-corrected precision matrix.

The validity of this estimator relies on two key assumptions: the local linearity of the summary response and its Gaussianity. Since finite differences are only valid in a locally linear regime, we check this assumption by repeating the derivative estimate at two step sizes and verifying that the derivatives, and the resulting Fisher matrices, are stable.

Furthermore, we validate the Gaussian approximation of the final summaries with Kolmogorov--Smirnov (KS) tests. Specifically, we apply a 1D KS test to each component of the compressed summary against a Gaussian with matching mean and variance; at a significance level of $\alpha=0.05$, we find the summaries to be consistent with the Gaussian assumption.

\paragraph{MOPED Compression.}
To compress the output of the vectorization map $V_\phi$ into a reduced space $\mathbb{R}^M$, one elegant choice is MOPED compression \citep{heavens2000massive}. This is a linear operator defined by a matrix $C_\phi \in \mathbb{R}^{d \times \text{dim}(V_\phi)}$ given by $C_\phi = [J_\theta \tilde{\mu}_\phi(\theta_{\mathrm{fid}})]^\top \tilde{\Sigma}_\phi^{-1}(\theta_{\mathrm{fid}})$, where $\tilde{\mu}_\phi(\theta)$ and $\tilde{\Sigma}_\phi(\theta)$ are the mean and covariance of the vectorization output evaluated at $\theta$. 
In our implementation, $J_\theta \tilde{\mu}_\phi(\theta_{\mathrm{fid}})$ and $\tilde{\Sigma}_\phi(\theta_{\mathrm{fid}})$ are estimated using a portion of the batch $\mathcal{B}$ and we evaluate the final Fisher information on the remaining part. 
In practice, the Jacobian $J_\theta \tilde{\mu}_\phi(\theta_{\mathrm{fid}})$ is estimated via finite differences, the inverse covariance estimated from $\tilde{\Sigma}_\phi(\theta_{\mathrm{fid}})$ is corrected using the Hartlap factor.
Under the Gaussian assumption of the vectorization output, MOPED is provably lossless, ensuring the Fisher information of the resulting $d$-dimensional summaries ($M=d$) matches the information content of the full vector.

\section{Proof of the Stability Theorem}
\label{app:stability_proof}
The proof proceeds in three steps: we first establish local Lipschitz continuity of $\phi \mapsto \Sigma_\phi^{-1}$ (Proposition~\ref{prop:Cinv}), then of $\phi \mapsto F_\phi(\theta_{\mathrm{fid}})$ (Proposition~\ref{prop:F_lip}), and finally of $\mathcal{L} = -\log|\det F|$. The last step relies on a standard perturbation bound for the log-determinant of a positive-definite matrix, combined with Weyl's inequality to ensure $F_\phi(\theta_{\mathrm{fid}}) + t\,\Delta F$ stays uniformly away from singular along the segment $t\in[0,1]$. For the sake of brevity, throughout this Appendix we write $J_\phi \coloneqq J_\theta\mu_\phi(\theta_{\mathrm{fid}})$ and $F_\phi \coloneqq F_\phi^G$.

\begin{proposition}
\label{prop:Cinv}
Let $s_\phi \colon \X \to \R^d$ satisfy Assumptions~\ref{ass:standing}(i)--(iii). Let $\varepsilon \coloneqq \|\phi^\prime - \phi\|$ and $\delta_\Sigma \coloneqq 4 L_s M \varepsilon$. If $\kappa_\Sigma \delta_\Sigma < 1$, then
\begin{equation}
\label{eq:Cinv_bound}
    \|\Sigma_{\phi^\prime}^{-1} - \Sigma_\phi^{-1}\|_{\mathrm{op}} \leq \frac{\kappa_\Sigma^2 \, \delta_\Sigma}{1 - \kappa_\Sigma \, \delta_\Sigma}.
\end{equation}
In particular, $\phi \mapsto \Sigma_\phi^{-1}$ is locally Lipschitz.
\end{proposition}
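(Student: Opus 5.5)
Two ingredients are needed. First, a bound $\|\Sigma_{\phi'}-\Sigma_\phi\|_{\mathrm{op}}\le\delta_\Sigma=4L_sM\varepsilon$ from Assumptions~\ref{ass:standing}(i)--(ii). Second, a transfer of this bound to the inverses via the resolvent identity and a Neumann-series argument controlled by (iii). Throughout I write $\Sigma_\phi=\E[s_\phi s_\phi^\top]-\mu_\phi\mu_\phi^\top$ with $\mu_\phi=\E[s_\phi(X)]$, the expectation taken at $\theta_{\mathrm{fid}}$.

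\textbf{Covariance step.} Set $\Delta s\coloneqq s_{\phi'}(X)-s_\phi(X)$, so that $\|\Delta s\|\le L_s\varepsilon$ pointwise by (i). I decompose
\[
s_{\phi'}s_{\phi'}^\top-s_\phi s_\phi^\top=\Delta s\,s_{\phi'}^\top+s_\phi\,\Delta s^\top .
\]
Taking expectations and applying Jensen/Cauchy--Schwarz with (ii) gives $\|\E[\Delta s\,s_{\phi'}^\top]\|_{\mathrm{op}}\le \E[\|\Delta s\|\|s_{\phi'}\|]\le L_s\varepsilon\,\E[\|s_{\phi'}\|^2]^{1/2}\le L_s M\varepsilon$, and the same bound holds for the second term. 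The second-moment part therefore changes by at most $2L_sM\varepsilon$.

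For the mean part, $\|\mu_{\phi'}-\mu_\phi\|\le L_s\varepsilon$ and $\|\mu_\phi\|\le M$. Writing $\mu_{\phi'}\mu_{\phi'}^\top-\mu_\phi\mu_\phi^\top$ the same way bounds its change by $2L_sM\varepsilon$ as well. Together these give $\|\Sigma_{\phi'}-\Sigma_\phi\|_{\mathrm{op}}\le 4L_sM\varepsilon=\delta_\Sigma$, which explains the constant $4$.

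\textbf{Inverse step.} Using the identity $\Sigma_{\phi'}^{-1}-\Sigma_\phi^{-1}=\Sigma_{\phi'}^{-1}(\Sigma_\phi-\Sigma_{\phi'})\Sigma_\phi^{-1}$ would immediately give $\kappa_\Sigma^2\delta_\Sigma$ if (iii) were applied at both $\phi$ and $\phi'$. To match the stated bound I will use only $\|\Sigma_\phi^{-1}\|_{\mathrm{op}}\le\kappa_\Sigma$ at the base point, which keeps $\varepsilon_0$ local. I write
\[
\Sigma_{\phi'}=\Sigma_\phi\bigl(I+\Sigma_\phi^{-1}E\bigr),\qquad E=\Sigma_{\phi'}-\Sigma_\phi,\quad \|\Sigma_\phi^{-1}E\|_{\mathrm{op}}\le\kappa_\Sigma\delta_\Sigma<1 .
\]
The Neumann series then shows $\Sigma_{\phi'}$ is invertible with $\|\Sigma_{\phi'}^{-1}\|_{\mathrm{op}}\le \kappa_\Sigma/(1-\kappa_\Sigma\delta_\Sigma)$. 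Substituting into the resolvent identity gives
\[
\|\Sigma_{\phi'}^{-1}-\Sigma_\phi^{-1}\|_{\mathrm{op}}\le \frac{\kappa_\Sigma}{1-\kappa_\Sigma\delta_\Sigma}\,\delta_\Sigma\,\kappa_\Sigma ,
\]
which is \eqref{eq:Cinv_bound}.

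\textbf{Local Lipschitz property.} Fix, for instance, $\varepsilon<1/(8\kappa_\Sigma L_sM)$, so that $\kappa_\Sigma\delta_\Sigma\le 1/2$. The bound then becomes $8\kappa_\Sigma^2L_sM\,\|\phi'-\phi\|$.

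The only delicate point is the covariance step. I need the bound in operator norm, and I need to make sure the mean correction is controlled by $M$ through $\|\mu_\phi\|^2\le\E\|s_\phi\|^2$. The rest is a routine Neumann-series computation.
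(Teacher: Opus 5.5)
Your proof is correct and matches the paper's argument step for step. Both use the same rank-one splitting of $\E[s s^\top]$ and $\mu\mu^\top$, together with Jensen and (ii), to get $\|\Sigma_{\phi'}-\Sigma_\phi\|_{\mathrm{op}}\le 4L_sM\varepsilon$, and then combine the resolvent identity with a Neumann-series bound on $\|\Sigma_{\phi'}^{-1}\|_{\mathrm{op}}$; the paper cites that bound from Horn--Johnson instead of deriving it. Your side remark is also right: since (iii) is assumed uniformly on $\Phi$, applying it at $\phi'$ gives the sharper bound $\kappa_\Sigma^2\delta_\Sigma$ directly, without the condition $\kappa_\Sigma\delta_\Sigma<1$.
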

\begin{proof}
We first bound $\|\Sigma_\phi -\Sigma_{\phi^\prime}\|_{\mathrm{op}}$. For ease of notation, given $\phi, \phi^\prime \in \Phi$, we write $v = s_\phi(X)$, $v^\prime = s_{\phi^\prime}(X)$, $\mu = \mu_\phi(\theta_{\mathrm{fid}})$, $\mu^\prime = \mu_{\phi^\prime}(\theta_{\mathrm{fid}})$. Since $\Sigma_\phi = \E\left[ vv^\top \right] - \mu\mu^\top$, it holds that
\[
    \Sigma_\phi - \Sigma_{\phi^\prime} = \left( \E \left[ vv^\top \right] - \E \left[ v^\prime (v^\prime)^\top \right] \right) - \left( \mu\mu^\top - \mu^\prime (\mu^\prime)^\top \right).
\]
Let us focus on the right-hand side of the equation. For the first term, we have the identity
\[
    vv^\top - v^\prime (v^\prime)^\top = (v - v^\prime) v^\top + v^\prime (v - v^\prime)^\top.
\]
In expectation, taking operator norms and using the triangle inequality,
\[
    \left\|\E\left[vv^\top\right] - \E\left[v^\prime (v^\prime)^\top\right]\right\|_{\mathrm{op}} \leq \E\left[\|v - v^\prime\|\,\|v\|\right] + \E\left[\|v^\prime\|\,\|v - v^\prime\|\right].
\]
By~(i), $\|v - v^\prime\| \leq L_s \varepsilon$ pointwise, so this factors out of both expectations. By Jensen's inequality and~(ii), $\E[\|v\|] \leq (\E[\|v\|^2])^{1/2} \leq M$, and similarly $\E[\|v^\prime\|] \leq M$. Therefore
\begin{equation}
\label{eq:first_term}
    \left\|\E\left[vv^\top\right] - \E\left[v^\prime (v^\prime)^\top\right]\right\|_{\mathrm{op}} \leq 2 L_s M \varepsilon.
\end{equation}
For the second term, the same identity gives
\[
    \mu\mu^\top - \mu^\prime(\mu^\prime)^\top = (\mu - \mu^\prime)\mu^\top + \mu^\prime(\mu - \mu^\prime)^\top,
\]
so that
\[
    \|\mu\mu^\top - \mu^\prime(\mu^\prime)^\top\|_{\mathrm{op}} \leq \|\mu - \mu^\prime\|\,\|\mu\| + \|\mu^\prime\|\,\|\mu - \mu^\prime\|.
\]
By the same reasoning as for the first term,
\begin{equation}
\label{eq:second_term}
    \|\mu\mu^\top - \mu^\prime(\mu^\prime)^\top\|_{\mathrm{op}} \leq 2 L_s M \varepsilon.
\end{equation}
By~\eqref{eq:first_term} and~\eqref{eq:second_term}, using the triangle inequality, we get that
\begin{equation}\label{eq:C_bound}
    \|\Sigma_\phi - \Sigma_{\phi^\prime}\|_{\mathrm{op}} \leq 4 L_s M \varepsilon \eqqcolon \delta_\Sigma.
\end{equation}
Since $\kappa_\Sigma \delta_\Sigma < 1$ by hypothesis, the matrix $\Sigma_{\phi^\prime}$ is invertible and the resolvent identity $\Sigma_{\phi^\prime}^{-1} - \Sigma_\phi^{-1} = -\Sigma_{\phi^\prime}^{-1}(\Sigma_{\phi^\prime} - \Sigma_\phi)\Sigma_\phi^{-1}$, combined with a Neumann series bound on $\|\Sigma_{\phi^\prime}^{-1}\|_{\mathrm{op}}$ (see e.g.~\cite{horn2012matrix}, Corollary~5.6.16), yields the statement.
\end{proof}
\begin{proposition}
\label{prop:F_lip}
Let $F_\phi(\theta_{\mathrm{fid}}) = J_\phi^\top \Sigma_\phi^{-1} J_\phi$ be the Fisher information matrix as in Equation~\eqref{eq:fisher_gauss}. Under Assumptions~\ref{ass:standing}(i)--(iv), it holds that $\phi \mapsto F_\phi(\theta_{\mathrm{fid}})$ is locally Lipschitz on $\Phi$. More precisely, for all $\phi, \phi^\prime \in \Phi$ with $\varepsilon \coloneqq \|\phi^\prime - \phi\|$ satisfying $\kappa_\Sigma \delta_\Sigma < 1$, where $\delta_\Sigma = 4 L_s M \varepsilon$, it holds that
\[
\|F_{\phi'}(\theta_{\mathrm{fid}}) - F_\phi(\theta_{\mathrm{fid}})\|_{\mathrm{op}} \leq \kappa_\Sigma L_J \varepsilon \left(2\|J_\phi\|_{\mathrm{op}} + L_J \varepsilon\right) + \frac{\kappa_\Sigma^2 \delta_\Sigma}{1 - \kappa_\Sigma \delta_\Sigma} \left(\|J_\phi\|_{\mathrm{op}} + L_J \varepsilon\right)^2.
\]
\end{proposition}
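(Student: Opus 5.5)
The plan is a telescoping decomposition of $F_{\phi'} - F_\phi$ into a part where only the Jacobian moves and a part where only the precision matrix moves. Then we use Proposition~\ref{prop:Cinv} for the precision part and Assumption~\ref{ass:standing}(iii)--(iv) for the Jacobian part. Writing $\Delta J \coloneqq J_{\phi'} - J_\phi$ and $\Delta P \coloneqq \Sigma_{\phi'}^{-1} - \Sigma_\phi^{-1}$, we add and subtract $J_{\phi'}^\top \Sigma_\phi^{-1} J_{\phi'}$:
\[
F_{\phi'} - F_\phi = \bigl(J_{\phi'}^\top \Sigma_\phi^{-1} J_{\phi'} - J_\phi^\top \Sigma_\phi^{-1} J_\phi\bigr) + J_{\phi'}^\top \Delta P\, J_{\phi'} .
\]
This particular splitting is chosen because it produces exactly the two terms of the claimed bound. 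The second term carries the full factor $(\|J_\phi\|_{\mathrm{op}}+L_J\varepsilon)^2$, and the first term involves only $\Sigma_\phi^{-1}$, whose norm is controlled by $\kappa_\Sigma$.

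For the first term, we expand $J_{\phi'} = J_\phi + \Delta J$, which gives
\[
\Delta J^\top \Sigma_\phi^{-1} J_\phi + J_\phi^\top \Sigma_\phi^{-1}\Delta J + \Delta J^\top \Sigma_\phi^{-1}\Delta J .
\]
Submultiplicativity of the operator norm, together with $\|\Sigma_\phi^{-1}\|_{\mathrm{op}}\le\kappa_\Sigma$ from (iii) and $\|\Delta J\|_{\mathrm{op}}\le L_J\varepsilon$ from (iv), bounds this by $\kappa_\Sigma L_J\varepsilon(2\|J_\phi\|_{\mathrm{op}} + L_J\varepsilon)$. We also use that the operator norm is invariant under transposition.

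For the second term, submultiplicativity gives $\|J_{\phi'}\|_{\mathrm{op}}^2\,\|\Delta P\|_{\mathrm{op}}$. The triangle inequality with (iv) gives $\|J_{\phi'}\|_{\mathrm{op}}\le \|J_\phi\|_{\mathrm{op}}+L_J\varepsilon$. Under the standing hypothesis $\kappa_\Sigma\delta_\Sigma<1$, Proposition~\ref{prop:Cinv} gives $\|\Delta P\|_{\mathrm{op}}\le \kappa_\Sigma^2\delta_\Sigma/(1-\kappa_\Sigma\delta_\Sigma)$. Summing the two bounds yields the displayed inequality.

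For the local Lipschitz claim, we restrict to $\varepsilon\le\varepsilon_1 \coloneqq 1/(8\kappa_\Sigma L_s M)$, so that $\kappa_\Sigma\delta_\Sigma\le 1/2$. On this ball the right-hand side is at most
\[
\bigl[\kappa_\Sigma L_J(2\|J_\phi\|_{\mathrm{op}}+L_J\varepsilon_1) + 8\kappa_\Sigma^2 L_s M(\|J_\phi\|_{\mathrm{op}}+L_J\varepsilon_1)^2\bigr]\varepsilon,
\]
which is linear in $\varepsilon$ with a constant depending only on $\phi$ through $\|J_\phi\|_{\mathrm{op}}$.

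The argument is routine. The only point needing care is the choice of telescoping order, since the opposite order would put $\kappa_\Sigma$ on $\Sigma_{\phi'}^{-1}$, whose norm would require an extra Neumann bound. We also have to make sure the invertibility of $\Sigma_{\phi'}$ is inherited from Proposition~\ref{prop:Cinv} rather than assumed separately.
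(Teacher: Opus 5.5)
Your proof is correct and is the paper's argument in grouped form: the paper expands $(J+\Delta J)^\top(P+\Delta P)(J+\Delta J)-J^\top PJ$ into seven terms. Its three $P$-terms sum to your $J_{\phi'}^\top\Sigma_\phi^{-1}J_{\phi'}-J_\phi^\top\Sigma_\phi^{-1}J_\phi$ and its four $\Delta P$-terms sum to your $J_{\phi'}^\top\Delta P\,J_{\phi'}$, which gives the same bound. Your closing remark about the telescoping order is not needed, though: Assumption (iii) bounds $\|\Sigma_{\phi'}^{-1}\|_{\mathrm{op}}\le\kappa_\Sigma$ uniformly on $\Phi$, so the opposite order also works without any extra Neumann bound.
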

\begin{proof}
For ease of notation, let $\varepsilon \coloneqq \|\phi^\prime - \phi\|$, $P \coloneqq \Sigma_\phi^{-1}$, $P^\prime \coloneqq \Sigma_{\phi^\prime}^{-1}$, $J \coloneqq J_\phi$, $J^\prime \coloneqq J_{\phi^\prime}$, $\Delta J \coloneqq J^\prime - J$, and $\Delta P \coloneqq P^\prime - P$. We have that
\begin{align*}
    F_{\phi'}(\theta_{\mathrm{fid}}) - F_\phi(\theta_{\mathrm{fid}}) & = {J^\prime}^\top P^\prime J^\prime - J^\top P J \\
    & = (J + \Delta J)^\top (P + \Delta P)(J + \Delta J) - J^\top P J \\
    & = (J + \Delta J)^\top ( P J + P \Delta J + \Delta P J + \Delta P \Delta J) - J^\top P J.
\end{align*}
Hence, we have that
\begin{align*}
F_{\phi'}(\theta_{\mathrm{fid}}) - F_\phi(\theta_{\mathrm{fid}}) = & J^\top P \, \Delta J + J^\top \Delta P \, J + J^\top \Delta P \, \Delta J + (\Delta J)^\top P \, J \\
& + (\Delta J)^\top P \, \Delta J + (\Delta J)^\top \Delta P \, J + (\Delta J)^\top \Delta P \, \Delta J.
\end{align*}
We recall that $\|P\|_{\mathrm{op}} \leq \kappa_\Sigma$ by hypothesis~(iii), $\|\Delta J\|_{\mathrm{op}} \leq L_J \varepsilon$ by hypothesis~(iv), and $\|\Delta P\|_{\mathrm{op}} \leq \kappa_\Sigma^2 \delta_\Sigma / (1 - \kappa_\Sigma \delta_\Sigma)$ by Proposition~\ref{prop:Cinv}. Hence, we get that
\[
\|F_{\phi'}(\theta_{\mathrm{fid}}) - F_\phi(\theta_{\mathrm{fid}})\|_{\mathrm{op}} \leq \kappa_\Sigma L_J \varepsilon \left(2\|J\|_{\mathrm{op}} + L_J \varepsilon\right) + \frac{\kappa_\Sigma^2 \delta_\Sigma}{1 - \kappa_\Sigma \delta_\Sigma} \left(\|J\|_{\mathrm{op}} + L_J \varepsilon\right)^2
\]
whenever $\kappa_\Sigma \delta_\Sigma < 1$. In particular, $\phi \mapsto F_\phi(\theta_{\mathrm{fid}})$ is locally Lipschitz.
\end{proof}

We are now able to prove the main result.

\begin{proof}[Proof of Theorem~\ref{thm:stability}]
Let $\Delta F \coloneqq F_{\phi'}(\theta_{\mathrm{fid}}) - F_\phi(\theta_{\mathrm{fid}})$ and define $g \colon [0,1] \to \R$ by $g(t) \coloneqq \log \det(F_\phi(\theta_{\mathrm{fid}}) + t \Delta F)$. By Proposition~\ref{prop:F_lip}, whenever $4 \kappa_\Sigma L_s M \varepsilon < 1$ there exists a constant $C_F > 0$ (depending on $\kappa_\Sigma$, $L_s$, $M$, $L_J$, and $\|J_\phi\|_{\mathrm{op}}$) such that $\|\Delta F\|_{\mathrm{op}} \leq C_F \varepsilon$. Set
\[
\varepsilon_0(\phi) \coloneqq \min\!\left\{\frac{1}{4 \kappa_\Sigma L_s M},\; \frac{\lambda_{\min}(F_\phi(\theta_{\mathrm{fid}}))}{C_F}\right\},
\]
so that for $\varepsilon < \varepsilon_0(\phi)$ both Proposition~\ref{prop:F_lip} applies and $\|\Delta F\|_{\mathrm{op}} < \lambda_{\min}(F_\phi(\theta_{\mathrm{fid}}))$. By Weyl's inequality for symmetric matrices, $\lambda_{\min}(F_\phi(\theta_{\mathrm{fid}}) + t \Delta F) \geq \lambda_{\min}(F_\phi(\theta_{\mathrm{fid}})) - t\|\Delta F\|_{\mathrm{op}} > 0$ for all $t \in [0,1]$, so $F_\phi(\theta_{\mathrm{fid}}) + t\Delta F$ is positive definite along the segment and $g$ is differentiable with
\[
g^\prime(t) = \mathrm{tr}\!\left((F_\phi(\theta_{\mathrm{fid}}) + t \Delta F)^{-1} \Delta F\right).
\]
By the mean value theorem, $|\mathcal{L}(\phi^\prime) - \mathcal{L}(\phi)| = |g(1) - g(0)| \leq \sup_{t \in [0,1]} |g^\prime(t)|$. Since $|\mathrm{tr}(C)| \leq p \, \|C\|_{\mathrm{op}}$ for any $C \in \R^{p \times p}$ and
\[
\|(F_\phi(\theta_{\mathrm{fid}}) + t \Delta F)^{-1} \Delta F\|_{\mathrm{op}} \leq \frac{\|\Delta F\|_{\mathrm{op}}}{\lambda_{\min}(F_\phi(\theta_{\mathrm{fid}})) - \|\Delta F\|_{\mathrm{op}}},
\]
we obtain
\begin{equation}
\label{eq:loss_tmh}
|\mathcal{L}(\phi^\prime) - \mathcal{L}(\phi)| \leq \frac{p \, \|\Delta F\|_{\mathrm{op}}}{\lambda_{\min} (F_\phi(\theta_{\mathrm{fid}})) - \|\Delta F\|_{\mathrm{op}}}.
\end{equation}
Combining~\eqref{eq:loss_tmh} with the bound $\|\Delta F\|_{\mathrm{op}} \leq C_F \varepsilon$ from Proposition~\ref{prop:F_lip} yields~\eqref{eq:main}.
\end{proof}

\section{Stability assumptions for TDA pipelines}
\label{app:pipeline_verification}

In this appendix we verify that the pipelines used in
Section~\ref{sec:experiments} satisfy
Assumptions~\ref{ass:standing}(i)--(v) of Theorem~\ref{thm:stability}.
Recall that the summary map takes the form
\[
    s_\phi \;=\; C_\phi \,\circ\, V_\phi \,\circ\, \mathrm{Dgm}_k \,\circ\, f_\phi,
\]
as in Eq.~\eqref{eq:pipeline}. For non-topological pipelines (\IMNN{},
MLP-only, GNN-only, $\log(C_\ell)$+MOPED, peak counts, wavelet scattering), the
persistence and diagram-vectorization stages collapse to the identity. For
pipelines with no trainable parameters (Cubical-PI, Cubical-Silhouette,
Cubical-Curves, $\log(C_\ell)$+MOPED, peak counts, wavelet scattering, DTM),
$s_\phi$ does not depend on $\phi$, so Assumptions~(i) and~(iv) hold trivially
with $L_s = 0$ and $L_J = 0$. We focus the verification on the components
that introduce $\phi$-dependence, and discuss~(ii), (iii) and~(v) for all
pipelines, learnable or not. We present the verification for a single
homological dimension $k$; concatenating multiple dimensions, or multiple
tomographic bins as in Section~\ref{sec:lensing}, preserves each assumption
since each is closed under direct sum of summaries.

\paragraph{Assumption~\ref{ass:standing}(i): Lipschitz continuity of $s_\phi$ in $\phi$.}
The map $\phi \mapsto s_\phi(X)$ is a composition of stages, each Lipschitz
in its input. We discuss the contribution of each stage in turn.

\emph{Filtrations.} The fixed filtrations used in this work (the cubical
filtration, the angular power spectrum used in the lensing baseline, and the
DTM vertex function on the spiral) do not depend on~$\phi$ and contribute a
constant of zero to the Lipschitz constant. The learnable filtrations are neural networks: the per-point MLP $g_\phi$ and
the GNN of Section~\ref{sec:swiss_roll}, the small CNN prepended to the cubical
filtration in TF-CNN-PersLay (Appendix~\ref{app:arch_filtrations}), and the
strided CNN encoders used in the \IMNN{} and in the non-TDA baselines MLP-only
and GNN-only. Each of these is a finite composition of affine maps with Lipschitz
activations (LeakyReLU, GELU); on a compact parameter set $\Phi$ and a
bounded data domain~$K$, the gradient $\nabla_\phi f_\phi(x)$ is uniformly
bounded over $\Phi \times K$, so $\phi \mapsto f_\phi(x)$ is Lipschitz with a
constant independent of $x$. The lower-star extension
$f_\phi(e_{ij}) = \|x_i - x_j\| + \max\{f_\phi(x_i),f_\phi(x_j)\}$ used on the
Alpha complex (Section~\ref{sec:swiss_roll}), and analogously the extension
to higher-dimensional simplices, is the sum of a $\phi$-independent term and
a max of $\phi$-Lipschitz scalars, hence Lipschitz with the same constant as
on vertices.

\emph{Persistent homology.} The map $f \mapsto \mathrm{Dgm}_k(f)$ is
$1$-Lipschitz with respect to the bottleneck distance on diagrams and the
sup-norm on filtration values, by the bottleneck stability theorem of
persistent homology~\cite{cohen2007stability}. For non-topological pipelines
this stage is the identity.

\emph{Vectorizations.} The fixed vectorizations are Lipschitz with respect
to standard distances on diagrams: persistence images are $1$-Lipschitz in
the $1$-Wasserstein distance~\cite{adams2017persistence}; persistence
silhouettes with uniform weighting are Lipschitz in the $\infty$-Wasserstein
distance on bounded support, as means of $1$-Lipschitz tent
functions~\cite{bubenik2015statistical}; the differentiable persistence
curves of Section~\ref{app:arch_vectorizations} are sums of soft-sigmoid
cumulative distribution functions, hence Lipschitz with constant
$O(\tau^{-1})$ on bounded support (where $\tau$ is the soft-sigmoid
temperature reported in Appendix~\ref{app:arch_vectorizations}). The
learnable vectorization PersLay applies a permutation-invariant set network
with spectral normalization on every linear layer
(Appendix~\ref{app:arch_vectorizations}), which constrains each layer's
operator norm to at most one and yields a Lipschitz set network in $W_1$ on
diagrams, with a constant uniformly bounded in $\phi$. The identity vectorization used by the \IMNN{}, $\log(C_\ell)$, peak
counts, and wavelet scattering is trivially $1$-Lipschitz. The non-topological baselines MLP-only and GNN-only end with
sum-pooling over $N$ points, which is Lipschitz with constant $\sqrt{N}$ by
Cauchy--Schwarz.

\emph{Compressors.} The MOPED compressor is a fixed linear map at each gradient step. Its operator norm is bounded by the empirical operator norm of the $C_\phi$ estimate (defined in Appendix \ref{app:gaussianity}), which is in turn controlled by the Hartlap correction and the condition-number monitoring described in Section~\ref{sec:loss}.
The periodic refit of MOPED every $50$
steps (Appendix~\ref{app:arch_compressors}) is treated as an external
procedure and does not enter the within-step Lipschitz analysis. The MLP
heads used as compressors on the spiral and in the pretrained
TF-CNN-PersLay-MLP variant are Lipschitz on compact $\Phi$ by the same neural
network argument used for filtrations.

By composition, $s_\phi$ is Lipschitz in $\phi$ with constant $L_s$ equal to
the product of the stage-wise constants.

\paragraph{Assumption~\ref{ass:standing}(ii): uniform second moment bound.}
All simulators used in this work produce data with bounded support or with
bounded second moment: the spiral is supported on a square
(Section~\ref{sec:swiss_roll}); the GRF is a centered Gaussian field on a
periodic grid with finite total variance (Section~\ref{sec:grf}); the
$\kappa$ maps from \texttt{sbi\_lens} have finite second moment after
Gaussian smoothing (Section~\ref{sec:lensing}). Together with compactness
of $\Phi$ and the Lipschitz bound on $s_\phi$ from~(i), this yields
\[
    M^2 \;\coloneqq\; \sup_{\phi \in \Phi}\; \E_{X \sim p(\cdot|\theta_{\mathrm{fid}})}\!\left[\|s_\phi(X)\|^2\right] \;<\; \infty.
\]
At the level of individual stages: bounded data yield bounded filtration
values, hence persistence diagrams supported in a bounded region of
$\R^2$~\cite{cohen2007stability}; the fixed vectorizations of
Section~\ref{app:arch_vectorizations} (PI, silhouette, persistence curves)
all map bounded diagrams to bounded vectors, and PersLay does so as well
since spectral normalization keeps its outputs bounded under bounded
inputs. The compressor preserves boundedness as the composition of either a
bounded linear map (MOPED) or a Lipschitz MLP on compact $\Phi$.

\paragraph{Assumption~\ref{ass:standing}(iii): bound on $\|\Sigma_\phi^{-1}\|_{\mathrm{op}}$.}
This assumption is verified empirically rather than by a structural argument
on the pipeline. We rely on three diagnostics, run for every reported
configuration: (a)~the sample covariance is computed on a structured batch
sized so that $n_s \gg M$ (Sections~\ref{sec:swiss_roll}--\ref{sec:lensing})
and corrected by the Hartlap factor~\cite{hartlap2007your} for unbiasedness
of the precision matrix; (b)~zero-variance features of the vectorization
output are removed before inversion; (c)~Gaussianity of the final summary is
checked by per-component Kolmogorov--Smirnov tests at $\alpha = 0.05$
(Appendix~\ref{app:gaussianity}). In all reported runs, $\Sigma_\phi$ is
invertible with bounded $\|\Sigma_\phi^{-1}\|_{\mathrm{op}}$.

\paragraph{Assumption~\ref{ass:standing}(iv): Lipschitz continuity of the Jacobian in $\phi$.}
The Jacobian $J_\theta\mu_\phi(\theta_{\mathrm{fid}})$ is estimated by centered
finite differences with step sizes $\Delta = (\Delta_1,\dots,\Delta_d)$
specified in Appendix~\ref{app:arch_training}. Under
Assumption~\ref{ass:standing}(i), Jensen's inequality gives
$\|\mu_{\phi'}(\theta) - \mu_\phi(\theta)\| \leq L_s \|\phi'-\phi\|$ for every
$\theta$ in the stencil. Applying the triangle inequality column-wise to the
finite-difference Jacobian and using $\|A\|_{\mathrm{op}} \leq \|A\|_F$ then
yields
\[
    L_J \;\leq\; \frac{2\sqrt{d}\,L_s}{\Delta_{\min}},
    \qquad \Delta_{\min} \coloneqq \min_{i=1,\dots,d} \Delta_i.
\]

\paragraph{Assumption~\ref{ass:standing}(v): non-degeneracy of $F_\phi(\theta_{\mathrm{fid}})$.}
Since $\Sigma_\phi^{-1}$ is positive definite by~(iii), positive definiteness
of $F_\phi(\theta_{\mathrm{fid}}) = J_\theta\mu_\phi(\theta_{\mathrm{fid}})^\top
\Sigma_\phi^{-1} J_\theta\mu_\phi(\theta_{\mathrm{fid}})$ is equivalent to
$J_\theta\mu_\phi(\theta_{\mathrm{fid}})$ having full column rank, i.e.\ the
summary being sensitive to all $d$ physical parameters at the fiducial. This
is verified empirically in all reported configurations: the marginal
uncertainties $\sigma(\theta_i) = (F_\phi^{-1})_{ii}^{1/2}$ reported in
Sections~\ref{sec:swiss_roll}--\ref{sec:lensing} are finite and of the
expected order of magnitude, and the condition number of $F_\phi$ is
monitored throughout training (see~(iii) above and the diagnostics in
Appendix~\ref{app:gaussianity}).

\section{Noisy Spiral Total Fisher}
\label{app:swiss_roll_fisher}
This appendix derives the total Fisher information matrix for the 2D noisy spiral generative model. Points in $\R^2$ are sampled i.i.d. from a known two-component mixture, with a background component $p_B$ and a spiral component $p_S$. Given a mixing weight $\lambda \in (0,1)$, the point cloud $\mathcal{X} = \{x_k\}_{k=1}^N \subseteq \R^2$ is obtained by independently sampling each point from
\begin{equation}
\label{eq:swiss_mixture}
    p(x; \theta) = \lambda p_B(x) + (1-\lambda) p_S(x; \theta),
\end{equation}
where $\theta = (\mu, w)^\top$ are the target parameters: $\mu$ controls the radial expansion of the spiral and $w$ its width. The background density $p_B$ does not depend on $\theta$ and is uniform on a square $\mathcal{S} = [-h, h]^2$ of area $4h^2$. Since the spiral component $p_S$ is most naturally expressed in polar coordinates $(\rho, \Phi)$, with $x = \rho \cos \Phi$ and $y = \rho \sin \Phi$, we work in this system throughout. The Jacobian determinant of the change of variables is $|J| = \rho$, and in these coordinates the square $\mathcal{S}$ is described by
\begin{equation}
\label{eq:swiss_polar_support}
    \mathcal{S}^\prime = \left\{ (\rho, \Phi) : 0 \leq \Phi < 2\pi, \; 
    0 \leq \rho \leq \frac{h}{\max(|\cos \Phi|, |\sin \Phi|)} \right\}.
\end{equation}
The background density therefore reads
\begin{equation}
\label{eq:swiss_pB_polar}
    p_B(\rho, \Phi) = \frac{\rho}{4 h^2}\, 
    \mathbb{I}_{\mathcal{S}^\prime}(\rho, \Phi),
\end{equation}
where $\mathbb{I}_A$ denotes the indicator function of the set $A$. Let us now focus on the spiral component $p_S$. The spiral is sampled hierarchically as follows:
\begin{enumerate}
    \item an angular coordinate $t$ is drawn uniformly on $[0, 4\pi)$, so that the spiral completes two full turns;
    \item given $t$, the radial coordinate is perturbed by Gaussian noise, $\rho \mid t \sim \mathcal{N}(\mu t, w^2)$;
    \item the sampled point is mapped to Cartesian coordinates via $(x, y) = (\rho \cos t, \rho \sin t)$.
\end{enumerate}
Since $t$ and $t + 2\pi$ map to the same point in $\R^2$, to obtain an injective map we reparametrize the generative process on $\Phi \in [0, 2\pi)$ by marginalizing over the winding. Setting $\Phi \coloneqq t \bmod 2\pi$, the two preimages $t \in \{\Phi, \Phi + 2\pi\}$ are equally likely conditionally on $\Phi$, and the hierarchical sampling becomes $\Phi \sim \mathcal{U}(0, 2\pi)$, and $\rho \mid \Phi \sim \frac{1}{2}\,\mathcal{N}(\mu \Phi, w^2) + \frac{1}{2}\,\mathcal{N}(\mu(\Phi + 2\pi), w^2)$ \footnote{The Gaussian conditional $\rho | \Phi$ places a small amount of mass on the region $\rho < 0$. For the fiducial $\mu = 0.6$, $w = 0.1$, this mass, concentrated near $\Phi\rightarrow0^+$, is bounded by $10^{-2}$ and we neglect it in the following.}. The spiral density therefore reads
\begin{equation}
\label{eq:swiss_pS_polar}
    p_S(\rho, \Phi \mid \theta) = \frac{1}{2\pi} \cdot \frac{1}{2\sqrt{2\pi}\, w} \left[ \exp\left(-\frac{(\rho - \mu \Phi)^2}{2w^2}\right) + \exp\left(-\frac{(\rho - \mu (\Phi + 2\pi))^2}{2w^2}\right) \right] \mathbb{I}_{\mathcal{S}^{\prime\prime}}(\rho, \Phi),
\end{equation}
supported on $\mathcal{S}^{\prime\prime} = \{(\rho, \Phi) \colon 0 \leq \Phi < 2\pi, \; \rho \geq 0\}$. Since the sample is i.i.d.\ from $p(\cdot; \theta)$, the total Fisher information matrix in polar coordinates reads~\cite{kay1993statistical}
\begin{equation}
\label{eq:swiss_fisher_def}
    F_X[{i,j}](\theta) = N \iint 
    \frac{\partial_{\theta_i} p(\rho, \Phi; \theta)\, 
    \partial_{\theta_j} p(\rho, \Phi; \theta)}{p(\rho, \Phi; \theta)}\, 
    d\rho\, d\Phi.
\end{equation}
Since only $p_S$ depends on $\theta$, we have $\partial_\theta p = (1-\lambda)\,\partial_\theta p_S$, and \eqref{eq:swiss_fisher_def} specializes to
\begin{equation}
\label{eq:swiss_fisher_spiral}
    F_X[{i,j}](\theta) = N \iint 
    \frac{(1-\lambda)^2\,\partial_{\theta_i} p_S\,\partial_{\theta_j} p_S}
    {\lambda\, p_B + (1-\lambda)\, p_S}\, d\rho\, d\Phi,
\end{equation}
where the dependence on $(\rho, \Phi, \theta)$ has been suppressed for readability. To lighten the notation, we introduce $\zeta_1 = \frac{\rho - \mu\Phi}{w}$, $\zeta_2 = \frac{\rho - \mu(\Phi+2\pi)}{w}$, and $g_\ell = \frac{1}{\sqrt{2\pi}\,w}e^{-\zeta_\ell^2/2}$ for $\ell=1, 2$. A direct computation then yields
\begin{align}
    \partial_w\, p_S &= \frac{1}{2\pi}\cdot\frac{1}{2} 
    \left[ g_1\, \frac{\zeta_1^2 - 1}{w} + g_2\, \frac{\zeta_2^2 - 1}{w} \right],
    \label{eq:swiss_dw_pS} \\[4pt]
    \partial_\mu\, p_S &= \frac{1}{2\pi}\cdot\frac{1}{2} 
    \left[ g_1\, \frac{\zeta_1\, \Phi}{w} + g_2\, \frac{\zeta_2\,(\Phi + 2\pi)}{w} \right].
    \label{eq:swiss_dmu_pS}
\end{align}
The total Fisher information matrix $F_X(\theta)$ is then obtained by substituting \eqref{eq:swiss_dw_pS}--\eqref{eq:swiss_dmu_pS} into \eqref{eq:swiss_fisher_spiral}. The resulting integral admits no known closed form and is evaluated numerically.

\section{Additional Noisy Spiral Results}
\label{app:swiss_roll_results}

We report the results for the spiral experiment for the other four fiducial parameter values in Table~\ref{tab:swiss_roll_grid}. In Figure \ref{fig:learned_filtration} we plot the vertex values and the corresponding filtration learned by TF-TDA-MLP method.

\begin{table}[t]
\centering
\scriptsize
\setlength{\tabcolsep}{3.5pt}
\renewcommand{\arraystretch}{1.08}
\caption{Complete spiral results for all fiducial configurations
$(\mu_{\text{fid}},w_{\text{fid}})$, averaged over $5$ seeds. Entries are
mean $\pm$ standard deviation of $\log|F|$.}
\label{tab:swiss_roll_grid}
\begin{tabular*}{\textwidth}{@{\extracolsep{\fill}}lccccc@{}}
\toprule
\textbf{Method} & $(0.6,0.1)$ & $(0.8,0.1)$ & $(0.7,0.2)$ & $(0.5,0.2)$ & $(0.7,0.3)$ \\
\midrule
\textit{Total}
& \textit{24.27} & \textit{24.24} & \textit{21.36} & \textit{21.41} & \textit{19.64} \\
\midrule
\tabgroup{6}{Fixed topological ablation: persistence without TopoFisher learning}
\midrule
DTM fixed filtration
& $17.18 \pm 0.06$ & $16.34 \pm 0.25$ & $15.83 \pm 0.09$ & $16.65 \pm 0.06$ & $15.16 \pm 0.05$ \\
\midrule
\tabgroup{6}{TopoFisher summaries: our learned topological contributions}
\midrule
\textbf{TF-TDA-MLP}
& $\mathbf{18.99 \pm 0.30}$ & $17.47 \pm 0.63$ & $\mathbf{17.50 \pm 0.28}$ & $18.32 \pm 0.37$ & $\mathbf{16.66 \pm 0.07}$ \\
TF-TDA-GNN
& $16.20 \pm 0.19$ & $15.57 \pm 0.37$ & $15.14 \pm 0.14$ & $16.63 \pm 0.43$ & $14.72 \pm 0.11$ \\
\midrule
\tabgroup{6}{Unconstrained neural references: non-topological summaries}
\midrule
MLP-only
& $18.96 \pm 0.58$ & $\mathbf{17.88 \pm 0.49}$ & $17.48 \pm 0.11$ & $\mathbf{18.40 \pm 0.21}$ & $16.54 \pm 0.19$ \\
GNN-only
& $18.26 \pm 0.25$ & $17.45 \pm 0.12$ & $16.78 \pm 0.16$ & $17.68 \pm 0.06$ & $15.87 \pm 0.08$ \\
\bottomrule
\end{tabular*}
\end{table}

To highlight the interpretability of the TF-TDA-MLP, we compare its learned filtration with the vertex mapping produced by the MLP-only baseline. Since the latter maps each point $x$ to a two-dimensional statistic $f_\phi(x) \in \mathbb{R}^2$ before sum-pooling, we visualize both components in Fig. \ref{fig:swiss_roll_NN}. 

\begin{figure}[t]
     \centering
     \begin{subfigure}[b]{0.33\textwidth}
         \centering
         \includegraphics[width=\textwidth]{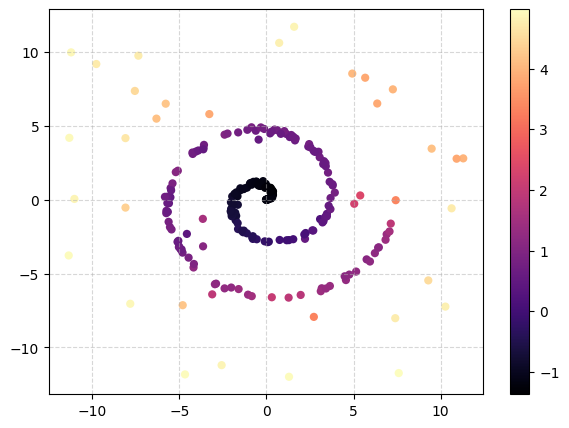}
         \caption{Learned vertex values $f_\phi(x)$.}
     \end{subfigure}
     \hfill
     \begin{subfigure}[b]{0.65\textwidth}
         \centering
         \includegraphics[width=\textwidth]{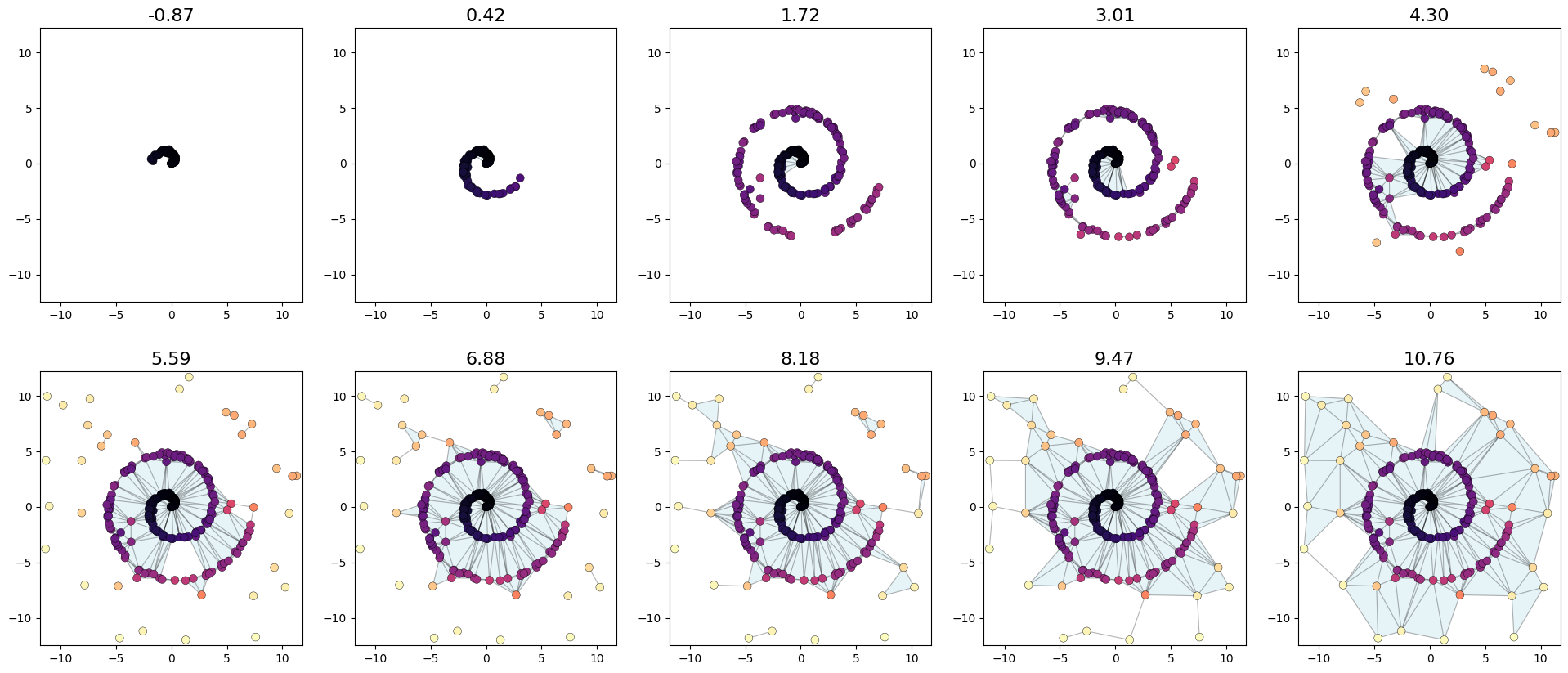}
         \caption{Resulting filtration.}
     \end{subfigure}
     \caption{Visualization of the TF-TDA-MLP pipeline. (a) The model assigns filtration values to vertices to "unroll" the spiral geometry. (b) The corresponding filtration effectively prioritizes the manifold while filtering background noise.}
     \label{fig:learned_filtration}
\end{figure}

In contrast to the topological approach, the purely neural baseline lacks a clear geometric interpretation. The network appears to concentrate the information on a small subset of points that assume extreme values to drive the sum-pooling operation, but their spatial distribution does not correspond to identifiable features of the manifold.

\begin{figure}[t]
     \centering
     \begin{subfigure}[b]{0.4\textwidth}
         \centering
         \includegraphics[width=\textwidth]{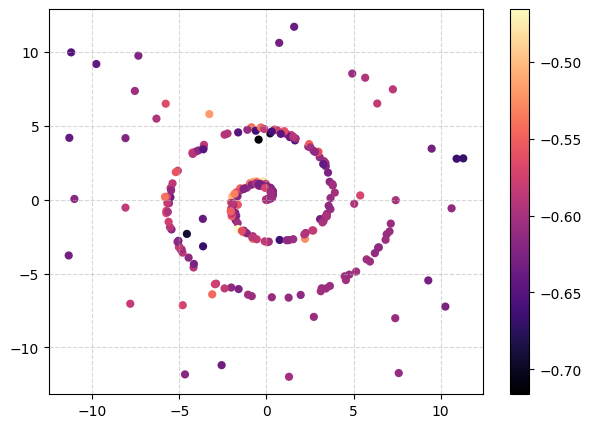}
         \caption{Component $f_\phi(x)^1$.}
     \end{subfigure}
     \hspace{0.3cm} % Regola questo valore per avvicinarle o allontanarle
     \begin{subfigure}[b]{0.4\textwidth}
         \centering
         \includegraphics[width=\textwidth]{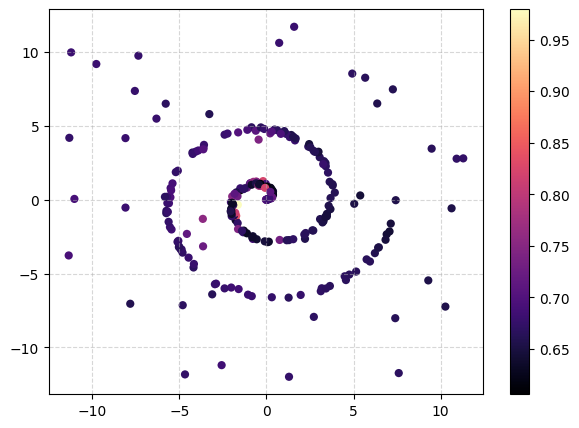}
         \caption{Component $f_\phi(x)^2$.}
     \end{subfigure}
     \caption{Vertex-wise outputs for the two components learned by the MLP-only baseline.}
     \label{fig:swiss_roll_NN}
\end{figure}

\section{Total Fisher information for the GRF benchmark}\label{app:totalGRF}
Let $x\in\mathbb{R}^{N\times N}$ be a zero-mean isotropic and homogeneous Gaussian random field on a periodic grid, and let $\tilde x_{\mathbf{k}}$ denote its discrete Fourier coefficients. Under the simulator used in Section~\ref{sec:grf}, the field is fully specified by the isotropic power spectrum
\begin{equation}
    P(k;\theta)=A_s\left(\frac{k}{\kpiv}\right)^{-B},
    \qquad \theta=(A_s,B),
\end{equation}
with $k=\|\mathbf{k}\|$ and $\kpiv$ the geometric mean of the nonzero Fourier radii. Since the field is Gaussian with zero mean, the exact data likelihood is multivariate Gaussian with diagonal covariance $\Sigma(\theta)$ in Fourier space. The Fisher matrix is therefore
\begin{equation}
    F_{ij}(\theta)
    = \frac{1}{2}\,\mathrm{Tr}\!\left[
        \Sigma^{-1}(\theta)\,\partial_{\theta_i}\Sigma(\theta)\,
        \Sigma^{-1}(\theta)\,\partial_{\theta_j}\Sigma(\theta)
    \right].
    \label{eq:grf_fisher_trace}
\end{equation}

Because $\Sigma(\theta)$ is diagonal in the Fourier basis, Eq.~\eqref{eq:grf_fisher_trace} reduces to a sum over modes:
\begin{equation}
    F_{ij}(\theta)
    = \frac{1}{2}\sum_{\mathbf{k}\neq 0}
    \frac{\partial_{\theta_i}P(k;\theta)\,\partial_{\theta_j}P(k;\theta)}
         {P(k;\theta)^2}
    = \frac{1}{2}\sum_{\mathbf{k}\neq 0}
    \partial_{\theta_i}\log P(k;\theta)\,
    \partial_{\theta_j}\log P(k;\theta).
    \label{eq:grf_fisher_modes}
\end{equation}
Here the sum runs over all nonzero discrete Fourier modes of the $N\times N$ lattice; the prefactor $1/2$ accounts for the reality condition $\tilde x_{-\mathbf{k}}=\tilde x_{\mathbf{k}}^\ast$. Equivalently, one may sum only over an independent half-plane of modes and drop the factor $1/2$.

For the power-law spectrum used here,
\begin{equation}
    \partial_{A_s}\log P(k;\theta)=\frac{1}{A_s},
    \qquad
    \partial_B \log P(k;\theta)= -\log\!\left(\frac{k}{\kpiv}\right).
\end{equation}
Substituting into Eq.~\eqref{eq:grf_fisher_modes} gives
\begin{align}
    F_{A_sA_s}
    &= \frac{1}{2A_s^2}\sum_{\mathbf{k}\neq 0} 1
    = \frac{N_{\rm nz}}{2A_s^2},
    \label{eq:grf_faa}\\
    F_{A_sB}
    &= -\frac{1}{2A_s}\sum_{\mathbf{k}\neq 0}
    \log\!\left(\frac{k}{\kpiv}\right),
    \label{eq:grf_fab}\\
    F_{BB}
    &= \frac{1}{2}\sum_{\mathbf{k}\neq 0}
    \log^2\!\left(\frac{k}{\kpiv}\right),
    \label{eq:grf_fbb}
\end{align}
where $N_{\rm nz}=N^2-1$ is the number of nonzero Fourier modes.

The choice of pivot scale
\begin{equation}
    \kpiv
    = \exp\!\left(
        \frac{1}{N_{\rm nz}}
        \sum_{\mathbf{k}\neq 0}\log k
      \right)
\end{equation}
makes the cross term vanish exactly:
\begin{equation}
    \sum_{\mathbf{k}\neq 0}\log\!\left(\frac{k}{\kpiv}\right)=0
    \qquad\Longrightarrow\qquad
    F_{A_sB}=0.
\end{equation}
Thus the total Fisher matrix is diagonal,
\begin{equation}
    F(\theta)=
    \begin{pmatrix}
        \dfrac{N_{\rm nz}}{2A_s^2} & 0\\[1.2ex]
        0 & \dfrac{1}{2}\sum_{\mathbf{k}\neq 0}\log^2\!\left(\dfrac{k}{\kpiv}\right)
    \end{pmatrix}.
    \label{eq:grf_fisher_final}
\end{equation}
Two useful consequences follow immediately. First, the bound is independent of the fiducial slope $B_0$, since $\partial_B\log P$ does not depend on $B$. Second, the overall box-size normalization cancels in $k/\kpiv$, so only the discrete lattice of mode radii matters.

For our benchmark, $N=64$, hence $N_{\rm nz}=64^2-1=4095$ and
\begin{equation}
    F_{A_sA_s}=\frac{4095}{2}=2047.5
    \qquad\Longrightarrow\qquad
    \sigma(A_s)=F_{A_sA_s}^{-1/2}=0.0221.
\end{equation}
The second diagonal entry is obtained by evaluating the lattice sum in Eq.~\eqref{eq:grf_fbb} numerically on the nonzero $64\times 64$ Fourier grid, yielding
\begin{equation}
    F_{BB}\approx 5.16\times 10^2,
    \qquad
    \sigma(B)=F_{BB}^{-1/2}\approx 0.044.
\end{equation}
Finally,
\begin{equation}
    \logdetF_{\rm tot}
    = \log\!\bigl(F_{A_sA_s}F_{BB}\bigr)
    \approx 13.89,
\end{equation}
which is the denominator of efficiency used throughout Section~\ref{sec:grf}.

\section{GRF: per-\texorpdfstring{$B$}{B} results}
\label{app:grf_all_B}

This appendix reports the per-spectral-index results for the GRF benchmark of
Section~\ref{sec:grf}. All entries are computed at $N=64$ and averaged over
$5$ independent seeds for each fiducial value
$B\in\{-2,-1,0,1,2\}$. The theoretical optimum is
$\log|F_X|=13.89$ for every value of $B$.

\begin{table}[ht]
\centering
\scriptsize
\setlength{\tabcolsep}{3.5pt}
\renewcommand{\arraystretch}{1.08}
\caption{GRF results for all spectral indices $B \in \{-2,-1,0,1,2\}$,
$N=64$, averaged over $5$ seeds. Entries are mean $\pm$ standard deviation of
$\log|F|$. The theoretical optimum is $\log|F_X|=13.89$ for every $B$.}
\label{tab:grf_all_B}
\begin{tabular*}{\textwidth}{@{\extracolsep{\fill}}lccccc@{}}
\toprule
\textbf{Method} & $B=-2$ & $B=-1$ & $B=0$ & $B=1$ & $B=2$ \\
\midrule
\textit{Total}
& \multicolumn{5}{c}{\textit{13.89}} \\
\midrule
\tabgroup{6}{Fixed topological ablations: persistence without TopoFisher learning}
\midrule
Cubical-PI
& $10.573 \pm 0.009$ & $11.314 \pm 0.007$ & $11.996 \pm 0.006$ & $12.085 \pm 0.006$ & $11.682 \pm 0.010$ \\
Cubical-Silhouette
& $9.895 \pm 0.012$ & $10.597 \pm 0.012$ & $11.304 \pm 0.005$ & $11.256 \pm 0.005$ & $10.369 \pm 0.010$ \\
Cubical-Curves
& $7.073 \pm 0.019$ & $8.685 \pm 0.007$ & $10.372 \pm 0.006$ & $10.486 \pm 0.008$ & $9.467 \pm 0.009$ \\
\midrule
\tabgroup{6}{TopoFisher summaries: our learned topological contributions}
\midrule
\textbf{TF-Cubical-PersLay}
& $\mathbf{13.011 \pm 0.013}$ & $\mathbf{12.880 \pm 0.003}$ & $\mathbf{12.756 \pm 0.007}$ & $\mathbf{12.520 \pm 0.005}$ & $11.980 \pm 0.008$ \\
TF-CNN-PersLay
& $11.920 \pm 0.120$ & $12.078 \pm 0.081$ & $12.189 \pm 0.081$ & $12.205 \pm 0.046$ & $11.936 \pm 0.103$ \\
\midrule
\tabgroup{6}{Unconstrained Fisher-neural reference: non-topological summary}
\midrule
\IMNN{}
& $12.142 \pm 0.040$ & $12.270 \pm 0.033$ & $12.34 \pm 0.043$ & $12.34 \pm 0.047$ & $\mathbf{12.29 \pm 0.092}$ \\
\bottomrule
\end{tabular*}
\end{table}

\begin{table}[ht]
\centering
\scriptsize
\setlength{\tabcolsep}{3.5pt}
\renewcommand{\arraystretch}{1.08}
\caption{GRF marginal uncertainties across spectral indices, averaged over
$5$ seeds. Each cell reports $\sigma(A_s)/\sigma(B)$. The theoretical values
are $\sigma(A_s)=0.022$ and $\sigma(B)=0.044$ for every $B$.}
\label{tab:grf_sigmas}
\begin{tabular*}{\textwidth}{@{\extracolsep{\fill}}lccccc@{}}
\toprule
\textbf{Method} & $B=-2$ & $B=-1$ & $B=0$ & $B=1$ & $B=2$ \\
\midrule
\textit{Total}
& \multicolumn{5}{c}{\textit{$0.022/0.044$}} \\
\midrule
\tabgroup{6}{Fixed topological ablations: persistence without TopoFisher learning}
\midrule
Cubical-PI
& $0.0436/0.1445$ & $0.0349/0.1049$ & $0.0327/0.0759$ & $0.0345/0.0690$ & $0.0405/0.0723$ \\
Cubical-Silhouette
& $0.0529/0.1967$ & $0.0381/0.1445$ & $0.0349/0.1006$ & $0.0411/0.0916$ & $0.0625/0.1104$ \\
Cubical-Curves
& $0.1845/0.7144$ & $0.0644/0.3410$ & $0.0379/0.1480$ & $0.0524/0.1190$ & $0.0929/0.1376$ \\
\midrule
\tabgroup{6}{TopoFisher summaries: our learned topological contributions}
\midrule
\textbf{TF-Cubical-PersLay}
& $0.0284/0.0570$ & $0.0287/0.0596$ & $0.0298/0.0606$ & $0.0325/0.0611$ & $0.0380/0.0672$ \\
TF-CNN-PersLay
& $0.0334/0.0775$ & $0.0339/0.0703$ & $0.0335/0.0674$ & $0.0333/0.0672$ & $0.0356/0.0720$ \\
\midrule
\tabgroup{6}{Unconstrained Fisher-neural reference: non-topological summary}
\midrule
\IMNN{}
& $0.0322/0.0693$ & $0.0320/0.0657$ & $0.0322/0.0636$ & $0.0321/0.0639$ & $0.0320/0.0636$ \\
\bottomrule
\end{tabular*}
\end{table}
\section{Further Lensing results}
\label{app:lensing_versions}

This appendix provides the per-bin and transfer-learning details for the weak
lensing experiment of Section~\ref{sec:lensing}. All results use the
$\sigma=2'$ smoothed $512\times512$ convergence maps and the parameter pair
$(\Om,\sig)$. For each tomographic bin and each method, we report the mean and
standard deviation across $5$ independent seeds. The per-bin quantities in
Tables~\ref{tab:lensing_per_bin_logF} are useful
for diagnosing how information changes with source redshift, but the
Survey-level constraints reported in the main text are obtained by adding the
Fisher matrices across bins, and then computing $\log|F_{\mathrm{tot}}|$ and
$\sqrt{(F_{\mathrm{tot}}^{-1})_{ii}}$.

\paragraph{Per-bin behavior.}
The per-bin log-determinants in Table~\ref{tab:lensing_per_bin_logF} show three
main trends. First, the standard $\log(C_\ell)$ baseline is comparatively flat
across bins, with $\log|F| \simeq 10.2$--$10.6$. Second, fixed topological
summaries recover substantially more information than the power spectrum in the
higher-redshift bins, where the lensing signal is stronger. Third, the learned
diagram vectorization  TF-Cubical-PersLay improves consistently over all fixed
topological vectorizations and increases monotonically from bin~$0$ to bin~$4$,
from $\log|F|=16.71$ to $\log|F|=18.99$. The unconstrained \IMNN{} baseline is
weaker in the first bin but becomes the strongest in the highest-redshift bins,
reaching $\log|F|=20.05$ in bin~$4$.

\begin{table}[ht]
\centering
\scriptsize
\setlength{\tabcolsep}{3.5pt}
\renewcommand{\arraystretch}{1.08}
\caption{Weak lensing per-bin results for $(\Om,\sig)$ at $\sigma=2'$
smoothing. Each cell reports mean $\pm$ standard deviation of $\log|F|$ across
$5$ seeds. Bin index increases with effective source redshift. Transfer rows
are trained on lognormal maps and evaluated on LPT maps without retraining.}
\label{tab:lensing_per_bin_logF}
\resizebox{\textwidth}{!}{%
\begin{tabular}{@{}lccccc@{}}
\toprule
\textbf{Method} & bin~$0$ & bin~$1$ & bin~$2$ & bin~$3$ & bin~$4$ \\
\midrule
\multicolumn{6}{@{}l}{\emph{External cosmology baseline}}\\
\midrule
$\log(C_\ell)$+MOPED
& $10.198 \pm 0.086$ & $10.574 \pm 0.094$ & $10.583 \pm 0.080$ & $10.499 \pm 0.072$ & $10.279 \pm 0.069$ \\
\midrule
\multicolumn{6}{@{}l}{\emph{Fixed topological ablations: persistence without TopoFisher 
learning}}\\
\midrule
Cubical-PI
& $12.672 \pm 0.017$ & $15.165 \pm 0.025$ & $16.055 \pm 0.019$ & $16.364 \pm 0.020$ & $16.346 \pm 0.025$ \\
Cubical-Curves
& $8.935 \pm 0.249$ & $13.448 \pm 0.074$ & $14.912 \pm 0.015$ & $15.464 \pm 0.014$ & $15.296 \pm 0.017$ \\
Cubical-Silhouette
& $13.048 \pm 0.019$ & $13.053 \pm 0.020$ & $12.673 \pm 0.019$ & $12.681 \pm 0.016$ & $13.259 \pm 0.015$ \\
\midrule
\multicolumn{6}{@{}l}{\emph{TopoFisher summaries: our learned topological contributions}}\\
\midrule
\textbf{TF-Cubical-PersLay}
& $\mathbf{16.705 \pm 0.019}$ & $\mathbf{17.853 \pm 0.012}$ & $18.286 \pm 0.019$ & $18.591 \pm 0.032$ & $18.988 \pm 0.029$ \\
TF-CNN-PersLay
& $14.755 \pm 0.133$ & $17.166 \pm 0.166$ & $17.272 \pm 0.608$ & $17.171 \pm 0.177$ & $16.965 \pm 0.514$ \\
\midrule
\multicolumn{6}{@{}l}{\emph{Unconstrained Fisher-neural reference: non-topological summary}}\\
\midrule
\IMNN{}
& $14.893 \pm 0.097$ & $17.624 \pm 0.111$ & $\mathbf{18.877 \pm 0.051}$ & $\mathbf{19.694 \pm 0.084}$ & $\mathbf{20.053 \pm 0.091}$ \\
\midrule
\multicolumn{6}{@{}l}{\emph{Transfer Fisher: lognormal-trained summaries evaluated on LPT maps}}\\
\midrule
TF-Cubical-PersLay
& $10.785 \pm 0.039$ & $13.608 \pm 0.014$ & $\mathbf{16.565 \pm 0.038}$ & $\mathbf{16.657 \pm 0.019}$ & $\mathbf{17.011 \pm 0.014}$ \\
TF-CNN-PersLay
& $1.036 \pm 1.292$ & $5.863 \pm 0.372$ & $7.807 \pm 0.638$ & $9.508 \pm 0.370$ & $12.123 \pm 2.380$ \\
\IMNN{}
& $-8.519 \pm 6.562$ & $-2.030 \pm 2.298$ & $3.388 \pm 2.321$ & $2.601 \pm 2.618$ & $3.710 \pm 1.623$ \\
\bottomrule
\end{tabular}}
\end{table}

\begin{table}[ht]
\centering
\scriptsize
\setlength{\tabcolsep}{3.5pt}
\renewcommand{\arraystretch}{1.08}
\caption{Weak lensing per-bin marginal uncertainties for in-distribution
lognormal simulations. Each cell reports $\sigma(\Om)/\sigma(\sig)$ averaged
over $5$ seeds. These are per-bin constraints only; survey-level constraints
are obtained by adding Fisher matrices across bins.}
\label{tab:lensing_per_bin_sigmas}
\resizebox{\textwidth}{!}{%
\begin{tabular}{@{}lccccc@{}}
\toprule
\textbf{Method} & bin~$0$ & bin~$1$ & bin~$2$ & bin~$3$ & bin~$4$ \\
\midrule
\multicolumn{6}{@{}l}{\emph{External cosmology baseline}}\\
\midrule
$\log(C_\ell)$
& $0.1242/0.1749$ & $0.0975/0.1401$ & $0.0866/0.1333$ & $0.0788/0.1311$ & $0.0712/0.1296$ \\
\midrule
\multicolumn{6}{@{}l}{\emph{Fixed topological ablations: persistence without TopoFisher learning}}\\
\midrule
Cubical-PI
& $0.0848/0.0987$ & $0.0481/0.0461$ & $0.0344/0.0305$ & $0.0299/0.0246$ & $0.0246/0.0178$ \\
Cubical-Curves
& $0.4600/0.4873$ & $0.0904/0.0826$ & $0.0496/0.0421$ & $0.0365/0.0291$ & $0.0306/0.0234$ \\
Cubical-Silhouette
& $0.0837/0.0884$ & $0.0783/0.0801$ & $0.0778/0.0853$ & $0.0676/0.0789$ & $0.0442/0.0529$ \\
\midrule
\multicolumn{6}{@{}l}{\emph{TopoFisher summaries: our learned topological contributions}}\\
\midrule
\textbf{TF-Cubical-PersLay}
& $\mathbf{0.0338/0.0410}$ & $\mathbf{0.0213/0.0227}$ & $0.0192/0.0198$ & $0.0172/0.0183$ & $0.0145/0.0168$ \\
TF-CNN-PersLay
& $0.0417/0.0424$ & $0.0226/0.0183$ & $0.0235/0.0163$ & $0.0232/0.0145$ & $0.0210/0.0132$ \\
\midrule
\multicolumn{6}{@{}l}{\emph{Unconstrained Fisher-neural reference: non-topological summary}}\\
\midrule
\IMNN{}
& $0.0449/0.0452$ & $0.0219/0.0175$ & $\mathbf{0.0157/0.0112}$ & $\mathbf{0.0119/0.0076}$ & $\mathbf{0.0097/0.0051}$ \\
\bottomrule
\end{tabular}}
\end{table}

\paragraph{Transfer-learning diagnostic.}
The transfer experiment is intentionally stringent: all trainable components
are optimized on the lognormal \texttt{sbi\_lens} simulator and then evaluated
without retraining on LPT-based maps at the same fiducial cosmology, resolution,
and smoothing scale. The transfer degradation of the \IMNN{} and
TF-CNN-PersLay is large despite their strong in-distribution Fisher values,
indicating that learned pixel-space neural stages can exploit
simulator-specific features. In contrast, Cubical-PersLay has no learned
pixel-space filtration and transfers much more reliably. Its transferred Fisher
ellipse is still more degenerate than its in-distribution counterpart, as
reflected by the weaker marginal uncertainties, but it preserves a high
determinant and therefore a strong constraint along at least one cosmological
parameter combination.

\paragraph{Details on simulation-based inference studies}

We perform simulation-based inference (also known as likelihood-free inference) using Neural Posterior Estimation \citep[NPE;][]{PapamakariosMurray2016, Lueckmann2017, Greenberg2019}.
Rather than approximating the intractable likelihood $p(d \mid \theta)$, NPE directly targets the posterior by training a conditional density estimator $q_{\varphi}(\theta \mid d)$, parameterised by neural-network weights $\varphi$, on simulated $(\theta, d)$ pairs drawn from the joint $p(\theta, d) = p(\theta)\,p(d \mid \theta)$.
 
We adopt a conditional normalizing flow as the density estimator, specifically a conditional RealNVP \citep{Dinh2017} composed of a stack of affine coupling layers. In each coupling layer, the scale and shift parameters of the affine transformation are produced by a small MLP conditioner that takes as input both the masked half of the parameter vector and the conditioning summary statistic $d$.
 
Training proceeds by minimising the forward Kullback--Leibler divergence between the true posterior and the variational approximation,
\begin{equation}
    \mathcal{L}(\varphi)
    \;=\;
    \mathbb{E}_{p(d)}\!\left[
        D_{\mathrm{KL}}\!\big(
            p(\theta \mid d) \,\big\|\, q_{\varphi}(\theta \mid d)
        \big)
    \right],
\end{equation}
which, up to a $\varphi$-independent constant, is equivalent to maximising the expected conditional log-likelihood of the flow on samples from the joint,
\begin{equation}
    \varphi^{\star}
    \;=\;
    \arg\max_{\varphi}\;
    \mathbb{E}_{p(\theta, d)}\!\left[\log q_{\varphi}(\theta \mid d)\right].
\end{equation}
We estimate this expectation by Monte Carlo over a training set of simulator draws. At inference time we condition the trained flow on the observed summary $d_{\mathrm{obs}}$ and draw samples directly from $q_{\varphi^{\star}}(\theta \mid d_{\mathrm{obs}})$.

We note that we perform our SBI using only auto-bins (i.e. correlations within the same bin). In principle, we could improve performance by including cross-correlations between different redshift bins \citep[compare][]{arxiv:2010.07376}. Likewise, in a realistic cosmological analysis we would compute the cross-correlation between all bin pairs in the power spectrum analysis. This is beyond the scope of demonstrating the feasibility of our TopoFisher pipeline, and uniformly disregarding cross-correlations between bins ensures a fair comparison between summary statistics. We reserve a more realistic lensing optimization that includes cross-correlations between bins for future work. We also note that the SBI posteriors of the Power Spectrum marginally exceed its previously computed Fisher information. This is due to the MOPED compression, which is lossless in theory, but imparts losses when the derivatives and covariance are noisy \citep{Asgari2015}.

We show the inference constraints on our two-parameter scenario in Figure \ref{fig:sbi_lensing_omsig8}, and for the extended 6-parameter inference in Figure \ref{fig:sbi_lensing_allparams}. The $\Om$ in the main text is $\Om = \Omega_\mathrm{c}+\Omega_\mathrm{b}$; the total matter density is the sum of the dark matter and baryonic matter densities.
The prior over cosmological parameters used during SBI training is

\begin{align*}
    \Omega_c &\sim \mathcal{N}_{>0}(0.2664,\, 0.2^2), \\
    \Omega_b &\sim \mathcal{N}(0.0492,\, 0.006^2), \\
    \sigma_8 &\sim \mathcal{N}(0.831,\, 0.14^2), \\
    h_0      &\sim \mathcal{N}(0.6727,\, 0.063^2), \\
    n_s      &\sim \mathcal{N}(0.9645,\, 0.08^2), \\
    w_0      &\sim \mathcal{N}_{[-2.0,\,-0.3]}(-1.0,\, 0.9^2),
\end{align*}
where $\mathcal{N}_{>0}$ denotes a half-normal (truncated at zero) and
$\mathcal{N}_{[a,b]}$ a normal truncated to $[a, b]$.
The reference (fiducial) cosmology is the truth value of each parameter:
$(\Omega_c, \Omega_b, \sigma_8, h_0, n_s, w_0)
 = (0.2664,\, 0.0492,\, 0.831,\, 0.6727,\, 0.9645,\, -1.0)$.
 We note that in the 6-parameter inference there are visible projection effects on $\Omega_\mathrm{c}$ in most models, induced by the degeneracy between $\Omega_\mathrm{c}$ and $w_0$ (in the bottom-left panel of Fig.~\ref{fig:sbi_lensing_allparams}). Since $w_0$ is only marginally constrained and has an asymmetric prior, it shifts the mean of the posterior for correlated parameters. As $\Omega_\mathrm{c}$ and $\sig$ are highly correlated as well, this also induces a bias on $\sig$. Curiously, both PersLay statistics seem to be most robust against that. This behavior is consistent across multiple random seeds.

\begin{figure}
\centering
\includegraphics[width=0.6\linewidth]{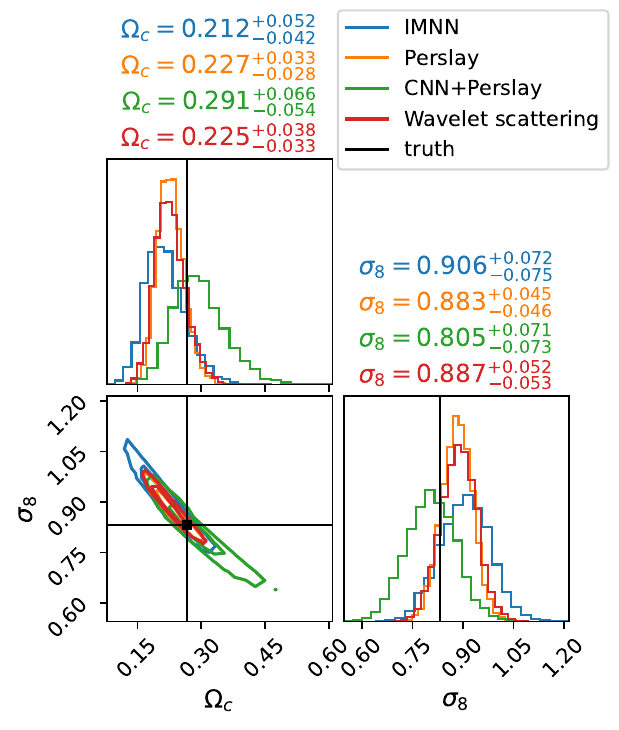}
\caption{Comparison of SBI constraints from different summary statistics. The contours show the 1- and 2-$\sigma$ uncertainties of the different models.}
\label{fig:sbi_lensing_omsig8}
\end{figure}

\begin{figure}
\includegraphics[width=\linewidth]{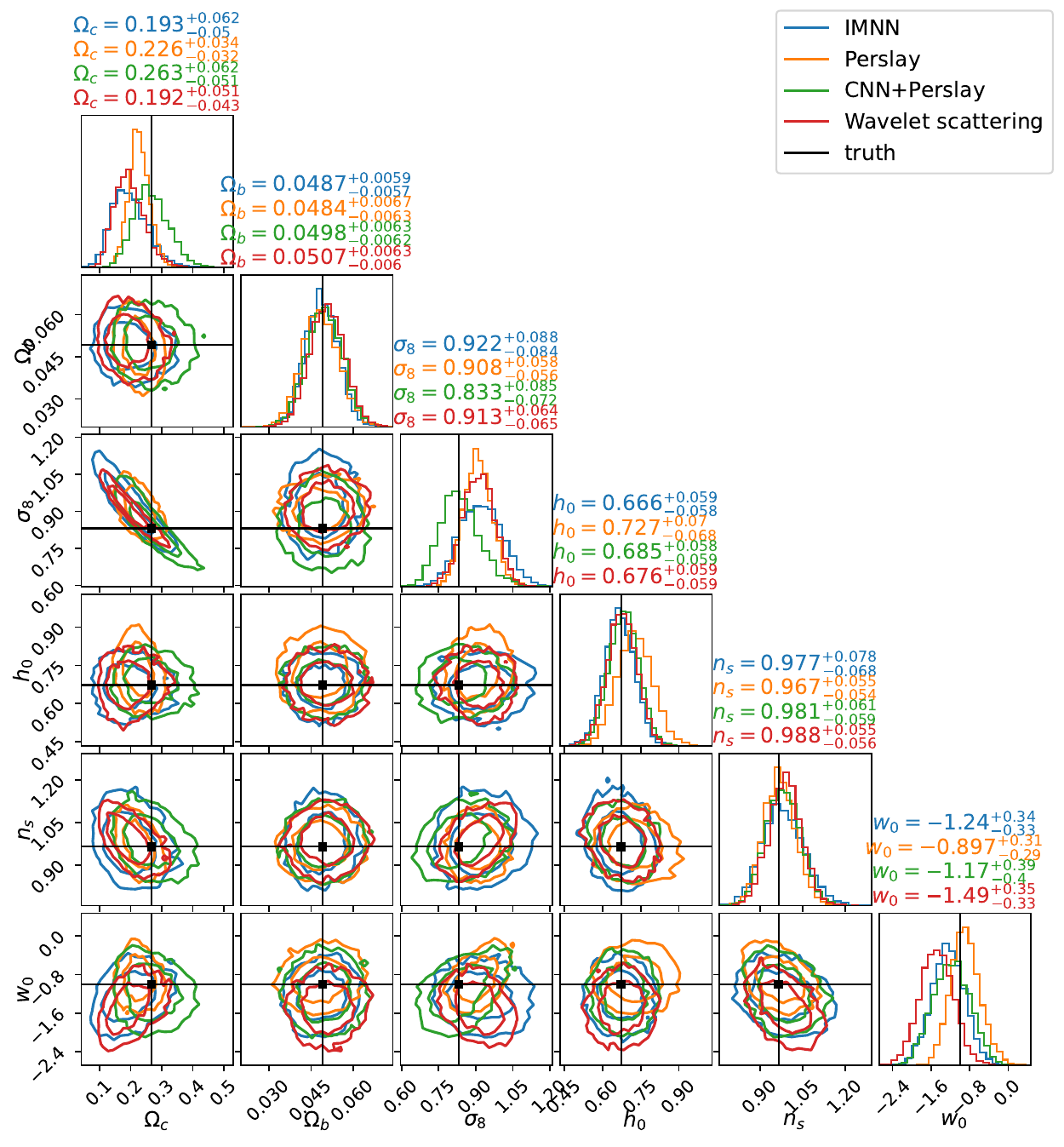}
\caption{Comparison of SBI constraints from different summary statistics over an extended set of cosmological parameters. The contours show the 1- and 2-$\sigma$ uncertainties of the different models. The equation of state of dark energy $w_0$ is particularly interesting to cosmologists.}
\label{fig:sbi_lensing_allparams}
\end{figure}

\section{Architectures and training hyperparameters}
\label{app:architectures}

This appendix describes the architectures, simulation data, compression
schemes, and training hyperparameters used in the reported experiments. The
notation here follows the paper, while Table~\ref{tab:arch_code_map} also gives
the corresponding configuration names used in the released code. The lensing
configuration files are generated over one two-parameter cosmological pair, the \texttt{Om\_s8} pair,
five tomographic bins, and five independent data seeds; the main weak-lensing
tables report  unless stated otherwise.

\subsection{Common pipeline structure and code mapping}
\label{app:arch_common}

Every method is implemented as the composition of Eq. \eqref{eq:pipeline}. For
topological methods, the filtration $f_\phi$ produces a scalar field or filtration values,
persistent homology is computed in dimensions $H_0$ and $H_1$, and
the vectorization $V_\phi$ maps the resulting persistence diagrams to Euclidean features.
For non-topological baselines, the persistence and diagram-vectorization stages
are replaced by the identity.

\begin{table}[ht]
\centering
\scriptsize
\setlength{\tabcolsep}{3.5pt}
\renewcommand{\arraystretch}{1.08}
\caption{Mapping of all the methods used in the analysis. ``Trainable
stage'' refers to parameters updated by the TopoFisher loss. MOPED is fitted
from simulation summaries and carries no optimizer-updated parameters.}
\label{tab:arch_code_map}
\begin{tabular*}{\textwidth}{@{\extracolsep{\fill}}llll@{}}
\toprule
\textbf{Method} & \textbf{Filtration / summary} &
\textbf{Vectorization} & \textbf{Trainable stage} \\
\midrule
$\log(C_\ell)$
& power spectrum
& identity
& none \\
Peak counts
& peak-count histogram
& identity
& none \\
Wavelet scattering
& WST coefficients
& identity
& none \\
Cubical-PI
& fixed cubical PH
& persistence image
& none \\
Cubical-Silhouette
& fixed cubical PH
& persistence silhouette
& none \\
Cubical-Curves
& fixed cubical PH
& differentiable curves
& none \\
TF-Cubical-PersLay
& fixed cubical PH
& PersLay
& vectorization \\
TF-CNN-PersLay
& learned CNN filtration + PH
& PersLay
& filtration and vectorization \\
\IMNN
& strided CNN encoder + dense summary head
& identity
& CNN encoder and dense head \\
\bottomrule
\end{tabular*}
\end{table}

\subsection{Filtrations and fixed scientific summaries}
\label{app:arch_filtrations}

\paragraph{Fixed cubical persistence.}
The fixed topological pipelines use a periodic cubical filtration of the input
field and compute persistence diagrams in homology dimensions $H_0$ and $H_1$.
In the code this is the \texttt{cubical} filtration with
\texttt{homology\_dimensions: [0,1]}, \texttt{periodic: true}, and a GUDHI
backend. The same filtration is used by Cubical-PI, Cubical-Silhouette,
Cubical-Curves, and TF-Cubical-PersLay. For the large lensing runs, persistence
features for some fixed-vectorization baselines are cached on disk when useful
for throughput.

\paragraph{Power spectrum baseline.}
The $\log(C_\ell)$ baseline computes the radially averaged auto-power spectrum
of each tomographic convergence map. We use $30$ logarithmically spaced
multipole bins, exclude the DC mode with $k_{\min}=1$, set the field size to
$512$, and apply a logarithmic transform to the binned power spectrum before
MOPED compression. This baseline is deterministic and has no trainable
parameters.

\paragraph{Peak-count baseline.}
Peak counts are computed by identifying pixels that are strictly greater than
their eight nearest neighbours. We discard a one-pixel boundary, bin peak
heights into $20$ uniformly spaced bins over $\kappa\in[-0.05,0.20]$, and
apply the transform $n_i\mapsto\log(1+n_i)$ to Gaussianize the count vector
before MOPED compression.

\paragraph{Wavelet scattering baseline.}
The wavelet scattering transform (WST) baseline uses the second-order
scattering transform~\citep{mallat2012group} as implemented in
\textsc{Kymatio}~\citep{andreux2020kymatio}. For each $512\times512$ map we
use $J=5$ dyadic scales, $L=4$ orientations, and maximum scattering order $2$.
The feature vector contains the spatial mean $S_0$, $20$ first-order
coefficients $S_1$, and $160$ second-order coefficients $S_2$, for a total of
$181$ features. We apply $\log(1+\cdot)$ to the non-negative $S_1$ and $S_2$
coefficients and leave $S_0$ unchanged.

\paragraph{Learnable CNN filtration for TF-CNN-PersLay.}
The TF-CNN-PersLay pipeline prepends a full-resolution CNN before cubical
persistence: a two-layer CNN with $8$ hidden
channels, $3\times3$ kernels, circular padding, and initialization scale
$0.1$. The CNN output is standardized before persistence, and the persistence
backend computes $H_0$ and $H_1$ diagrams using the \texttt{T} construction with
sub-batches of $100$ maps. The CNN and PersLay stages are both optimized by the
TopoFisher loss. During the main runs, the CNN is
kept frozen for the first $1000$ epochs while PersLay and the MOPED statistics
stabilize, and is then unfrozen for joint optimization.

\paragraph{\IMNN{} baseline.}
The \IMNN{} baseline is an unconstrained non-topological Fisher-neural
reference following the principle of Information Maximising Neural
Networks~\citep{charnock2018automatic}. It consists of a strided convolutional
encoder followed by a dense head that directly outputs
$n_{\rm summaries}=d$ summaries, where $d$ is the number of parameters in the
local Fisher analysis. In all two-parameter experiments reported here,
$d=2$. The persistence and vectorization stages are replaced by the identity,
and the compression stage is also the identity: the dense head itself is the
learned compressor.

For the GRF benchmark, the encoder has channel widths $[16,32,16]$, kernel
sizes $[5,3,3]$, strides $[2,2,2]$, and circular padding. The $64\times64$
input is reduced to an $8\times8$ spatial representation and flattened before
a dense head with hidden width $64$ and output dimension $2$.

For the weak-lensing benchmark, the encoder has channel widths
$[16,32,32,32,16]$, kernel sizes $[7,5,3,3,3]$, strides $[4,2,2,2,2]$, and
circular padding. The $512\times512$ input is reduced to an $8\times8$ spatial
representation and flattened before a dense head with hidden width $128$ and
output dimension $2$.

\subsection{Diagram vectorizations}
\label{app:arch_vectorizations}

\paragraph{Identity.}
The identity vectorization is used by non-topological summaries such as
\IMNN{}, $\log(C_\ell)$, peak counts, and WST: the output of the filtration is directly passed to the compressor.

\paragraph{Persistence images.}
Cubical-PI uses persistence images~\citep{adams2017persistence} with an
$8\times8$ grid, bandwidth $1.0$, and persistence weighting. We compute
features separately for $H_0$ and $H_1$ and concatenate the two outputs.

\paragraph{Persistence silhouettes.}
Cubical-Silhouette uses uniformly weighted persistence silhouettes with
$50$ grid points per homology dimension and a $99$-th percentile support clip.
The $H_0$ and $H_1$ silhouette vectors are concatenated before compression.

\paragraph{Differentiable persistence curves.}
Cubical-Curves uses differentiable cumulative curves of diagram coordinates.
For the lensing runs, the vectorization uses birth and death curves
(\texttt{curves: "B,D"}), $50$ grid points, a $99$-th percentile support clip,
minimum persistence $0$, and soft-sigmoid temperature $2$. For the GRF runs,
where diagram values have a different scale, we use the same curve family with
$50$ grid points and the GRF-specific temperature reported in
Table~\ref{tab:grf_training}.

\paragraph{PersLay.}
PersLay~\citep{carriere2020perslay} is the learnable diagram vectorization used
by TF-Cubical-PersLay and TF-CNN-PersLay. It is applied separately to the $H_0$
and $H_1$ diagrams. The main configuration uses a $16$-dimensional point
embedding, a hidden dimension of $32$, no post-pooling MLP
(\texttt{post\_pool\_dim: 0}), and spectral normalization on the linear layers.

\subsection{Compression and Fisher estimation}
\label{app:arch_compressors}

Unless stated otherwise, all field-based GRF and lensing pipelines use MOPED
compression (Appendix~\ref{app:gaussianity}) as the final linear map before
Fisher estimation, with output dimension $d=2$ matching the parameter
dimension. The covariance inverse entering the MOPED operator is
Hartlap-corrected as described in Appendix~\ref{app:gaussianity}. The
\IMNN{} is the only exception: its dense head outputs a two-dimensional
summary directly, so the compression stage is the identity.

\paragraph{Periodic MOPED refitting.}
For fixed pipelines, MOPED is fitted once from the relevant simulator
dataset and held constant. For pipelines with learnable vectorizations or
filtrations, the feature distribution drifts during training, so the MOPED
operator must be periodically re-estimated. In the main lensing
TF-Cubical-PersLay runs, we refit MOPED every $50$ optimizer epochs. In the
full-resolution TF-CNN-PersLay runs, both MOPED and the PersLay input
normalization are refreshed every $50$ epochs using at most $1{,}000$
samples; this cap avoids repeatedly processing the full $17{,}500$-sample
training split at $512^2$ resolution. These refits update only the linear
compression and normalization statistics; the optimizer-updated parameters
remain those listed in Table~\ref{tab:arch_code_map}.

\paragraph{Transfer-Fisher protocol.}
For transfer-Fisher evaluations on LPT maps, the learned nonlinear stages
(CNNs, PersLay weights, and/or \IMNN{} weights) are loaded from the
corresponding lognormal checkpoint and kept fixed. For MOPED-based methods,
the empirical covariance, finite-difference Jacobians, precision matrix,
and MOPED operator are then re-estimated on the LPT evaluation dataset,
exactly as in any Fisher-information measurement. For the \IMNN{}, there is
no MOPED operator to refit; we evaluate the frozen two-dimensional network
output and re-estimate only the Fisher statistics on LPT maps. Transfer
results therefore test whether the learned feature extractor transfers;
they do not retrain or fine-tune learned filtrations, vectorizations, or
neural summaries on LPT maps.

\subsection{Simulation dataset and finite differences}
\label{app:field_generation}

\paragraph{Noisy spiral datasets.}
For the spiral benchmark, each point cloud contains $N=240$ points, with target
parameters $\theta=(\mu,w)$. We use the finite-difference simulator datasets
described in Section~\ref{sec:swiss_roll}: $n_s=n_d=10{,}000$ samples per
configuration and five independent seeds. All point-cloud pipelines operate on
$k$-nearest-neighbour distance features with $k=100$ for the main table.

\paragraph{GRF datasets.}
For the GRF benchmark, each sample is a zero-mean periodic $64\times64$ field
with power spectrum
$P(k)=A_s(k/\kpiv)^{-B}$. We use
$\theta_{\rm fid}=(A_s,B)=(1,B_0)$ with
$B_0\in\{-2,-1,0,1,2\}$, finite-difference steps
$\Delta\theta=(0.1,0.1)$, and
$n_s=n_d=40{,}000$ simulations per configuration over five seeds. The same
train/validation/test split fractions as above are used.

\paragraph{Lensing dataset.}
Weak-lensing convergence maps are generated with the
\texttt{sbi\_lens} lognormal simulator~\citep{lanusse2023sbi_lens} at
$512\times512$ resolution over a $10^\circ\times10^\circ$ field of view and
five tomographic bins. Maps are smoothed with a Gaussian kernel of width
$\sigma=2'$ before any summary statistic is computed. For every configuration
we use $n_s=20{,}000$ fiducial simulations for covariance estimation and
$n_d=20{,}000$ simulations at each finite-difference point for derivative
estimation. The training/validation/test split fractions are
$0.875/0.0625/0.0625$.

For the LPT transfer experiment we use a separate LPT dataset at the same
resolution, smoothing scale, tomographic-bin definition, and
\texttt{Om\_s8} finite-difference grid. Transfer configurations contain no
training block: checkpoints are supplied at runtime, learned weights are
frozen, and Fisher quantities are evaluated on the LPT dataset.

\subsection{Training protocol and hyperparameters}
\label{app:arch_training}

Training uses structured minibatches containing fiducial simulations for
covariance estimation and paired finite-difference simulations for derivative
estimation. At each optimizer step, the loss is the negative
log-determinant Gaussian Fisher loss of Eq.~\eqref{eq:topofisherloss},
optionally augmented by skewness and excess-kurtosis penalties on the
compressed summary. Validation is performed on a held-out simulator dataset, and
early stopping uses the validation Fisher loss.

Unless stated otherwise, optimization uses Adam with weight decay $10^{-4}$,
ReduceLROnPlateau scheduling with multiplicative factor $0.5$ and minimum
learning rate $10^{-6}$, validation every $10$ epochs, and five independent
data seeds.

\begin{table}[ht]
\centering
\scriptsize
\setlength{\tabcolsep}{3.5pt}
\renewcommand{\arraystretch}{1.08}
\caption{Main lensing training hyperparameters. Fixed baselines have no
optimizer-updated parameters; their entries indicate the estimator and split
configuration used by the common training/evaluation driver. Here
$\lambda_s$ and $\lambda_k$ are the skewness and excess-kurtosis penalties.}
\label{tab:lensing_training}
\begin{tabular*}{\textwidth}{@{\extracolsep{\fill}}lccccccp{0.24\textwidth}@{}}
\toprule
\textbf{Pipeline} & \textbf{Epochs} & \textbf{LR} & \textbf{Batch}
& \textbf{Patience} & \textbf{Clip} & $(\lambda_s,\lambda_k)$
& \textbf{Extra settings} \\
\midrule
Fixed MOPED baselines
& 2000 & $10^{-3}$ & 50 & 100 & 1.0 & $(0.05,0.2)$
& No trainable weights; MOPED fit/evaluation only. \\
TF-Cubical-PersLay
& 2000 & $10^{-3}$ & 50 & 100 & 1.0 & $(0.05,0.2)$
& MOPED refit every $50$ epochs; \texttt{min\_delta}$=10^{-6}$. \\
TF-CNN-PersLay
& 2500 & $3{\times}10^{-4}$ & 32 & 200 & 0.5 & None
& LR warm-up $50$; loss clamp $20$; NaN recovery; CNN frozen for first $1000$ epochs; MOPED and PersLay renormalization every $50$ epochs, capped at $1000$ samples. \\
\IMNN
& 4000 & $5{\times}10^{-4}$ & 100 & 400 & 1.0 & $(0.10,0.00)$
& Cosine LR schedule; identity compression; dense head outputs two summaries directly. \\
\bottomrule
\end{tabular*}
\end{table}

All lensing runs use train/validation/test fractions
$0.875/0.0625/0.0625$. The LR scheduler patience is $20$ for the fixed
baselines, TF-Cubical-PersLay, and \IMNN{}, and $30$ for TF-CNN-PersLay.

\begin{table}[ht]
\centering
\scriptsize
\setlength{\tabcolsep}{3.5pt}
\renewcommand{\arraystretch}{1.08}
\caption{GRF training hyperparameters. These settings are shared across
$B_0\in\{-2,-1,0,1,2\}$ and five seeds.}
\label{tab:grf_training}
\begin{tabular*}{\textwidth}{@{\extracolsep{\fill}}lcccccp{0.28\textwidth}@{}}
\toprule
\textbf{Pipeline family} & \textbf{Epochs} & \textbf{LR} & \textbf{Batch}
& \textbf{Patience} & $(\lambda_s,\lambda_k)$ & \textbf{Extra settings} \\
\midrule
Fixed topological baselines
& 2000 & $10^{-3}$ & 500 & 100 & $(0.05,0.20)$
& MOPED compression; no trainable filtration/vectorization. \\
TF-Cubical-PersLay
& 2000 & $10^{-3}$ & 500 & 100 & $(0.05,0.20)$
& MOPED refit every $50$ epochs. \\
TF-CNN-PersLay
& 2000 & $10^{-3}$ & 500 & 100 & $(0.05,0.20)$
& LR warm-up $50$; loss clamp $20$; NaN recovery; MOPED refit every $50$ epochs. \\
\IMNN
& 4000 & $5{\times}10^{-4}$ & 200 & 400 & $(0.10,0.00)$
& Cosine LR schedule; identity compression; dense head outputs two summaries directly. \\
\bottomrule
\end{tabular*}
\end{table}

The GRF differentiable-curve vectorization uses birth, death, and persistence
curves on $50$ grid points with a $99.9$-th percentile support clip and
temperature $0.1$. The lensing curve vectorization instead uses birth and death
curves, a $99$-th percentile support clip, and temperature $2$, matching the
larger dynamic range of $512^2$ convergence maps.

\paragraph{Hardware and caching.}
GRF runs at $64^2$ resolution are CPU-friendly. Lensing runs involving
full-resolution $512^2$ persistence use a single large-memory GPU for the
CNN/persistence forward and backward passes, with sub-batching inside the
persistence layer. Fixed-vectorization lensing baselines use cached persistence
features when available. End-to-end timing and persistence-backend details are
reported separately in the scalability appendix.

\section{Compute resources and running times}
\label{sec:scalability}

All experiments were run on an anonymized institutional HPC cluster with
CPU-only nodes and single-GPU jobs. CPU nodes used AMD EPYC-class processors
with 256\,GB--1\,TB RAM; GPU jobs used one NVIDIA A100 40\,GB or one NVIDIA
H100 80\,GB accelerator. CPU persistence jobs used up to 32 cores per task
(\texttt{gudhi} parallelism), while GPU jobs requested a single accelerator.
Wall-clock times below are medians measured from scheduler logs. For lensing,
one job means one tomographic bin and one random seed; for GRFs, one job means
one spectral index $B_0$ and one random seed.

\begin{table}[t]
\centering
\caption{Representative wall-clock times for the reported pipelines. Lensing
jobs are one tomographic bin $\times$ one seed; GRF jobs are one $B_0$ value
$\times$ one seed. Times are median scheduler wall-clock times.}
\label{tab:compute_times}
\scriptsize
\setlength{\tabcolsep}{4pt}
\renewcommand{\arraystretch}{1.08}
\begin{tabular*}{\textwidth}{@{\extracolsep{\fill}}llcc@{}}
\toprule
\textbf{Benchmark} & \textbf{Pipeline} & \textbf{Resource request} & \textbf{Median time/job} \\
\midrule
\multirow{8}{*}{Lensing}
& $\log(C_\ell)$ & 4 CPU, 32\,GB & 12 min \\
& Peak counts & 4 CPU, 32\,GB & 22 min \\
& Cubical-Silhouette & 32 CPU, 120\,GB & 40 min \\
& Cubical-Curves & 32 CPU, 120\,GB & 40 min \\
& Cubical-PI & 32 CPU, 120\,GB & 48 min \\
& Wavelet scattering & 16 CPU, 120\,GB & 301 min \\
& TF-Cubical-PersLay & 32 CPU, 384\,GB & 63 min \\
& TF-CNN-PersLay & 1 A100/H100, 200\,GB & 190--270 min \\
& \IMNN{} & 1 H100, 200\,GB & 13 min  \\
\midrule
\multirow{4}{*}{GRF}
& Cubical-PI/Silhouette/Curves & 32 CPU, 120\,GB & 4--6 min \\
& TF-Cubical-PersLay & 32 CPU, 128\,GB & 6 min \\
& TF-CNN-PersLay & 1 A100/H100 & 7--8 h \\
& \IMNN{} & 1 A100/H100 & 7 min\\
\bottomrule
\end{tabular*}
\end{table}

\end{document}